\documentclass{article}

\usepackage[final]{arxiv}

\usepackage{graphicx}
\usepackage{subcaption}
\usepackage{adjustbox}
\usepackage{booktabs}
\usepackage{multirow}
\usepackage{wrapfig}

\newcommand{\best}[1]{\textbf{#1}}
\newcommand{\second}[1]{\underline{#1}}
\usepackage{subcaption}
\usepackage[utf8]{inputenc} % allow utf-8 input
\usepackage[T1]{fontenc}    % use 8-bit T1 fonts
\usepackage{hyperref}       % hyperlinks
\usepackage{url}            % simple URL typesetting
\usepackage{booktabs}       % professional-quality tables
\usepackage{amsfonts}       % blackboard math symbols
\usepackage{nicefrac}       % compact symbols for 1/2, etc.
\usepackage{microtype}      % microtypography
\usepackage{xcolor}         % colors
\definecolor{mich-blue-high}{HTML}{0027CC}
\long\def\blue#1{\bgroup\color{mich-blue-high}#1\egroup}
\long\def\red#1{\bgroup\color{red}#1\egroup}
\usepackage{multirow}
\usepackage{amsthm,amssymb,amsmath}
\newtheorem{theorem}{Theorem}[section]
\newtheorem{corollary}{Corollary}[theorem]

\newtheorem{assumption}[theorem]{Assumption}

\newtheorem{remark}[theorem]{Remark}

\usepackage{algorithm,algorithmic}
\usepackage{cleveref}
\usepackage{amsfonts}
\usepackage{wrapfig}
\usepackage{booktabs}  % for better tables
\usepackage{graphicx}
\graphicspath{{figs/}}
\title{
Trajectory Constraints for Imaging Inverse Problems
}

\author{Chaoyan Huang\textsuperscript{1,2},~Haijie Yuan\textsuperscript{1}, Saiprasad Ravishankar\textsuperscript{1,3}\\ 
  \textsuperscript{1}Department of Computational Mathematics, Science, \& Engineering, Michigan State University 
  \textsuperscript{2}Department of Electrical Engineering and Computer Science, University of Michigan\\
  \textsuperscript{3}Department of Biomedical Engineering, Michigan State University \\
  \texttt{\{huang345, yuanhai1, ravisha3\}@msu.edu, \{chaoyanh\}@umich.edu}
}

\begin{document}

\maketitle

\begin{abstract}
Diffusion-based and iterative methods have become effective tools for solving imaging inverse problems. Their reconstruction process naturally forms a trajectory of intermediate estimates. 
Although these intermediate estimates define a reconstruction trajectory, most methods do not explicitly regularize the transitions between consecutive states. To address this limitation, we introduce TRACE, a training-free \textbf{TRA}jectory-\textbf{C}onstrained r\textbf{E}construction framework that stabilizes the reconstruction path by coupling adjacent states along the trajectory. This gives a trajectory-level model that can be interpreted as a sequence of proximal updates. Since the exact proximal update is generally intractable, we approximate it with a neural mapping. This yields a diffusion-like reconstruction process with an explicit coupling between neighboring states. We provide a stability analysis showing that temporal coupling bounds trajectory variation and that this control is preserved under untrained network updates. Experiments on linear and nonlinear image reconstruction tasks show that TRACE improves reconstruction quality. Trajectory-level analyses and ablations confirm that temporal coupling directly affects state transitions along the reconstruction path.
\end{abstract}

\section{Introduction}

Many imaging tasks can be formulated as inverse problems, where the goal is to recover an unknown signal from incomplete, corrupted, or indirect measurements.
Diffusion-based generative models have recently provided powerful priors for imaging inverse problems \cite{song2021score,song2022medical}. 
While diffusion-based solvers can achieve strong performance by combining learned generative priors with measurement consistency, many of them rely on domain-matched pretrained diffusion models \cite{wang2023ddnm,kawar2022ddrm}. 
Recent consistency-based solvers such as SITCOM \cite{alkhouri2025sitcom} further improve the sampling process for inverse problems, but still build on pretrained diffusion priors.
The need to handle diverse imaging modalities, image distributions, and corruptions without expensive training or fine-tuning
motivates training-free reconstruction methods, which adapt directly to the given measurements without external training data.
Deep Image Prior (DIP) was proposed to exploit the implicit bias of untrained convolutional networks for inverse problems \cite{ulyanov2018dip}. 
Subsequent DIP-based methods improve this paradigm through sequential input updates \cite{alkhouri2024image}, self-guidance \cite{liang2025analysis}, or transferable weights \cite{liangugodit}. 
Recent methods further combine untrained neural priors with diffusion-based reconstruction. 
SDI \cite{luo2025selfdiffusion} replaces the pretrained score model with an untrained neural network and alternates between noising and denoising steps. 
uDiG-DIP \cite{liang2025sequential} uses a pretrained diffusion model to guide the network input in a sequential DIP reconstruction. 
%\sai{Can cite Hyungjin's or Liyue's works involving OOD adaptation?}\cy{why? we are training-free model, there is no OOD case in our method}

Most of the aforementioned methods refine an initial estimate through a sequence of intermediate reconstructions. 
Thus, the final result is influenced not only by the data-consistency term and the implicit or learned prior, but also by how the intermediate estimates evolve. 
However, this evolution is considered only indirectly through the update rule \cite{alkhouri2024image} or reuse weights \cite{liangugodit,luo2025selfdiffusion}, without imposing an explicit constraint on transitions between consecutive states. %\sai{Previous sentence only refers to DIP methods but would be good to connect to zero-shot diffusion too similar to section beginning?}\cy{add one ref SDI}

This motivates a trajectory-level perspective on image reconstruction.
Rather than focusing only on the final estimate, we formulate reconstruction as a sequence of coupled intermediate states and constrain the transitions between consecutive states.
In this work, we introduce TRACE, a training-free \textbf{tra}jectory-\textbf{c}onstrained r\textbf{e}construction framework for imaging inverse problems. 
TRACE models reconstruction as a sequence of coupled states and explicitly constrains transitions between consecutive estimates through a temporal coupling term. 
The resulting objective function admits a natural proximal interpretation. 
After fixing one state, updating its predecessor corresponds to a proximal step involving the data term, the implicit prior, and a quadratic coupling to the next state.
Since this proximal step is generally intractable, we approximate it using a neural mapping. 
The resulting procedure resembles a diffusion-like reconstruction process, but differs from existing methods by explicitly controlling the variation between adjacent states along the trajectory.

We provide a stability analysis showing that temporal coupling bounds the variation between consecutive states and that this control is preserved under approximate neural updates. 
Empirically, we evaluate final reconstruction quality {under several imaging inverse problems} together with trajectory-level behavior, coupling-strength ablations, and transferability to diffusion-style solvers. %\sai{Any result here to mention more in list of contributions?}
Across imaging inverse problems, TRACE improves reconstruction quality and produces reconstruction trajectories with more controlled state transitions.
Our contributions are as follows.
\begin{itemize}
    \item We highlight trajectory constraints as an overlooked design principle for diffusion-based and iterative inverse problem solvers.
    \item We propose TRACE, a training-free \textbf{tra}jectory-\textbf{c}onstrained r\textbf{e}construction (TRACE) framework that constrains transitions between intermediate reconstruction states.
    \item We derive a proximal interpretation and stability bounds showing how temporal coupling controls state transitions under approximate neural updates.
    \item We demonstrate improved reconstruction quality across imaging inverse problems and provide trajectory-level analyses that connect these improvements to the proposed temporal coupling.
\end{itemize}

\section{Method}

We consider an inverse problem of the form $\mathbf{y}=\mathcal{A}\mathbf{x}+\mathbf{e}$, where $\mathcal{A}$ denotes the forward operator, $\mathbf{y}$ are the observed measurements, $\mathbf{e}$ denotes measurement noise, and $\mathbf{x}$ is the unknown signal to be reconstructed. 
As discussed in the introduction, many reconstruction methods produce a sequence of intermediate estimates. 
We therefore denote the reconstruction process as an evolving trajectory $\{\mathbf{x}_t\}_{t=0}^T$ in the solution space, where $\mathbf{x}_T$ denotes an initial reconstruction state and $\mathbf{x}_0$ denotes the final output {with a backward index convention}. 
{In existing methods, the transition from $\mathbf{x}_{t+1}$ to $\mathbf{x}_t$ is usually determined by the chosen reconstruction step itself, without an explicit penalty on the change between the two states.
TRACE instead regularizes this change directly.}
%Rather than leaving this path entirely determined by the update rule, we explicitly regularize the changes between consecutive states. \sai{Did you define the term "update rule"?}

Following this trajectory-level perspective, we formulate reconstruction as a sequence of coupled states, where transitions between consecutive estimates are regularized by a temporal coupling term.
This gives the following objective function
\begin{equation}
\min_{\{\mathbf{x}_t\}_{t=0}^T}
\sum_{t=0}^{T}
\left(
f(\mathbf{x}_t)
+
\lambda_t R(\mathbf{x}_t)
\right)
+
\sum_{t=0}^{T-1}
\frac{\beta_t}{2}
\|\mathbf{x}_{t}-\mathbf{x}_{t+1}\|_2^2,
\label{eq:trajectory_obj}
\end{equation}
where $f(\mathbf{x})=\frac{1}{2}\|\mathcal{A}\mathbf{x}-\mathbf{y}\|_2^2$ enforces data consistency, $R(\cdot)$ denotes {the regularization,} %an implicit \sai{seems explicit here?} image prior, 
and $\lambda_t$ are positive parameters. 
The temporal coupling term penalizes large changes between adjacent states along this trajectory, and the parameters \(\beta_t\) control the strength of this coupling.
%\sai{You mentioned reverse trajectory for first time. Do you mean you go from T to 0. Can introduce it better since you are not introducing it as diffusion framework particularly.}\cy{I added the backward index convention in the first paragraph of this section, where we give the definition of {x_t} trajectory}
We do not solve \eqref{eq:trajectory_obj} as a full joint optimization problem over all states. 
Instead, TRACE uses a backward one-step approximation induced by this trajectory energy. 
The algorithm starts from an initial state $\mathbf{x}_T$ without coupling and then updates the trajectory backward from $t=T-1$ to $t=0$. 
At each step, the current state $\mathbf{x}_t$ is coupled to the already available neighboring state $\mathbf{x}_{t+1}$. Hence, fixing $\mathbf{x}_{t+1}$, the update of $\mathbf{x}_t$ is given by
\begin{equation}
\mathbf{x}_t
=
\arg\min_{\mathbf{x}}
f(\mathbf{x})
+
\lambda_t R(\mathbf{x})
+
\frac{\beta_t}{2}
\|\mathbf{x}-\mathbf{x}_{t+1}\|_2^2.
\label{eq:prox_subproblem}
\end{equation}
Let $F_t(\mathbf{x})=f(\mathbf{x})+\lambda_t R(\mathbf{x})$, 
then we can interpret \eqref{eq:prox_subproblem} as a proximal update
\begin{equation}\label{eq:prox_update}
\mathbf{x}_t:=P_t(\mathbf{x}_{t+1})
:=
\operatorname{prox}_{\frac{1}{\beta_t}F_t}
(\mathbf{x}_{t+1}).
\end{equation}

Since the prior $R(\cdot)$ is implicit, the proximal operator in \eqref{eq:prox_update} does not have a closed-form solution. 
Motivated by deep plug-and-play methods, where proximal operators are often replaced by deep neural networks \cite{zhang2021plug,wu2024extrapolated,huang2025deep}, we approximate this implicit proximal step using an untrained neural mapping
\begin{equation}
\mathbf{x}_t
\approx
D_{\theta_t}(\mathbf{x}_{t+1}),
\end{equation}
where $D_{\theta_t}$ is optimized at step $t$. {In this case, the role of the prior \(R(\cdot)\) is represented implicitly by the network architecture and optimization updates, rather than by an explicit regularization term.}
Unlike standard plug-and-play methods that use a fixed pretrained denoiser, $D_{\theta_t}$ is optimized at each step for the current measurement and acts as a training-free implicit neural prior. 
Besides, inspired by the diffusion models \cite{park2026measurement,chung2023dps}, we perturb the available state before applying the neural mapping. Specifically, at step $t$, let
\begin{equation}
\mathbf{u}_t
=
\mathbf{x}_{t+1}
+
\sigma_t\boldsymbol{\epsilon}_t,
\qquad
\boldsymbol{\epsilon}_t\sim\mathcal{N}(0,I),
\end{equation}
where $\sigma_t$ controls the perturbation level. 
{Substituting the parameterization 
\(\mathbf{x}=D_{\theta_t}(\mathbf{u}_t)\) into the local proximal objective in \eqref{eq:prox_subproblem} gives the network training loss}
% Hence, the network parameters are optimized by minimizing \sai{You didn't connect (6) to (4) and (3) clearly -- (6) can be seen as (2) with R(x) constraining x to lie in the range of a (trainable) network whose input is motivated by diffusion as in (5)}
\begin{equation}
\mathcal{L}_{\theta_t}
=
\frac{1}{2}
\|\mathcal{A}D_{\theta_t}(\mathbf{u}_t)-\mathbf{y}\|_2^2
+
\frac{\beta_t}{2}
\|D_{\theta_t}(\mathbf{u}_t)-\mathbf{x}_{t+1}\|_2^2.
\label{eq:training_loss}
\end{equation}
The data-consistency term keeps the network output compatible with the measurements, while the coupling term keeps the update close to the available state $\mathbf{x}_{t+1}$.  %\sai{We normally say measurements - plural as each entry of y is a single measurement.}
Thus, each update is guided by the measurement while remaining constrained by its neighboring state. 
The overall algorithm is summarized in Algorithm~\ref{alg:trajectory_diffusion}. 
{We use the superscript \(k\) to index the inner network optimization iterations: \(\theta_t^0\) denotes the initial weights at trajectory step \(t\), and \(\theta_t^K\) denotes the weights after \(K\) optimizer steps.
} 

\begin{algorithm}[t]
\caption{TRACE: \textbf{TRA}jectory-\textbf{C}onstrained r\textbf{E}construction framework}
\label{alg:trajectory_diffusion}
\begin{algorithmic}[1]
\STATE \textbf{Input:} $\mathbf{y}$, $\mathbf{x}_T$, $\theta_T^K$, trajectory steps $T$, inner network iterations $K$, forward operator $\mathcal{A}$, perturbation $\{\sigma_t\}_{t=0}^{T-1}$, coupling parameters $\{\beta_t\}_{t=0}^{T-1}$.
\FOR{$t=T-1,\ldots,0$}
\STATE Sample $\boldsymbol{\epsilon}_t\sim\mathcal{N}(0,I)$, 
$\mathbf{u}_t=\mathbf{x}_{t+1}+\sigma_t\boldsymbol{\epsilon}_t$.
\STATE Initialize $\theta_t^0=\theta_{t+1}^K$.
    \FOR{$k=1,\ldots,K$}
        \STATE {Update $\theta_t^k$ from $\theta_t^{k-1}$ by one optimizer step on $\mathcal{L}_{\theta_t}$ in \eqref{eq:training_loss}.}
        %\STATE Update $\theta_t^k$ by minimizing $\mathcal{L}_{\theta_t^k}$ in \eqref{eq:training_loss}. \sai{Forgot to add period at end of sentences. I added. Why is there superscript k for loss here? Do you mean 1 step of optimizer like GD to minimize $L_{\theta}$ in (6)?}
    \ENDFOR
    \STATE Set $\mathbf{x}_t = D_{\theta_t^K}(\mathbf{u}_t)$.
\ENDFOR
\RETURN $\mathbf{x}_0$
\end{algorithmic}
\end{algorithm}

Given a random noise input $\mathbf{z}$, we first obtain the initial trajectory state $\mathbf{x}_T$ through {an uncoupled DIP optimization. The network weights $\theta_T^0$ are initialized using the standard random initialization used in DIP \cite{ulyanov2018dip}, and then optimized for \(K\) steps}
\begin{equation}
\theta_T^K
\approx
\arg\min_{\theta}
\frac{1}{2}
\|\mathcal{A}D_{\theta}(\mathbf{z})-\mathbf{y}\|_2^2,
\qquad
\mathbf{x}_T = D_{\theta_T^K}(\mathbf{z}).
\end{equation}
% \sai{superscript K and 0 are not defined.}
% The network weights $\theta_T^0$ are initialized using the standard random initialization used in DIP \cite{ulyanov2018dip}. 
Starting from $\mathbf{x}_T$, TRACE proceeds backward over $t=T-1,\ldots,0$. 
For each coupled step, the network is warm-started from the optimized weights of the previous trajectory state, i.e.,
$\theta_t^0 \leftarrow \theta_{t+1}^K$.
At each step, the network is optimized with the loss in \eqref{eq:training_loss}, and the next reconstruction state is obtained by evaluating the optimized network as $\mathbf{x}_t = D_{\theta_t^K}(\mathbf{u}_t)$.

\section{Theoretical Analysis}

We analyze TRACE from the perspective of trajectory stability. 
Our goal is to characterize how the temporal coupling term controls transitions between consecutive reconstruction states. 

For the update in Algorithm \ref{alg:trajectory_diffusion}, we study how far the updated state $\mathbf{x}_t$ is from $\mathbf{x}_{t+1}$. 
We first give a standard assumption in proximal analysis as follows. 
% \sai{Is `minimizer' below assumed global, local, or just critical point?}
\begin{assumption}\label{assump:wellposed}
For each $t$, $F_t$ is proper, lower semi-continuous, bounded below. The local subproblem in \eqref{eq:prox_subproblem} admits at least one global minimizer $\mathbf{x}_t^\star=P_t(\mathbf{x}_{t+1})$. The subgradient is bounded by $M_t$, i.e., there exists $\mathbf{g}_t^\star\in \partial F_t(\mathbf{x}_t^\star)$, s.t., $\|\mathbf{g}_t^\star\|_2\le M_t$. 
\end{assumption}
\begin{assumption}\label{assump:approx}
The neural approximation error is bounded by a positive constant $\delta_t$ 
\begin{equation}
\mathbb{E}
\|D_{\theta_t^K}(\mathbf{u}_t)-\mathbf{x}_t^\star\|_2
\le
\delta_t,
\label{eq:approx_error}
\end{equation}
\end{assumption}
{All expectations are taken with respect to the random perturbation used to form $\mathbf{u}_t$.}
\begin{theorem}\label{thm:approx}
Under Assumptions~\ref{assump:wellposed} and \ref{assump:approx}, the implemented update satisfies
\begin{equation}
\mathbb{E}
\|{\mathbf{x}}_t-\mathbf{x}_{t+1}\|_2
\le
\frac{M_t}{\beta_t}
+
\delta_t .
\label{eq:approx_transition_bound}
\end{equation}
\end{theorem}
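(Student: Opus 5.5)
The proof splits $\mathbf{x}_t-\mathbf{x}_{t+1}$ into a neural approximation error and an exact proximal displacement. Since $\mathbf{x}_t=D_{\theta_t^K}(\mathbf{u}_t)$ in Algorithm~\ref{alg:trajectory_diffusion}, we write
\begin{equation*}
\mathbf{x}_t-\mathbf{x}_{t+1}
=
\bigl(D_{\theta_t^K}(\mathbf{u}_t)-\mathbf{x}_t^\star\bigr)
+
\bigl(\mathbf{x}_t^\star-\mathbf{x}_{t+1}\bigr),
\end{equation*}
where $\mathbf{x}_t^\star=P_t(\mathbf{x}_{t+1})$ is the global minimizer provided by Assumption~\ref{assump:wellposed}. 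Applying the triangle inequality and taking expectation over the perturbation $\boldsymbol{\epsilon}_t$ gives
\begin{equation*}
\mathbb{E}\|\mathbf{x}_t-\mathbf{x}_{t+1}\|_2
\le
\mathbb{E}\|D_{\theta_t^K}(\mathbf{u}_t)-\mathbf{x}_t^\star\|_2
+
\|\mathbf{x}_t^\star-\mathbf{x}_{t+1}\|_2 .
\end{equation*}
The first term is at most $\delta_t$ by Assumption~\ref{assump:approx}, \eqref{eq:approx_error}. It therefore remains to show the deterministic bound $\|\mathbf{x}_t^\star-\mathbf{x}_{t+1}\|_2\le M_t/\beta_t$. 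We condition on $\mathbf{x}_{t+1}$, so $\mathbf{x}_t^\star$ is not random with respect to $\boldsymbol{\epsilon}_t$.

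For the proximal displacement, we use the first-order optimality condition of \eqref{eq:prox_subproblem}. Because $\mathbf{x}_t^\star$ minimizes $F_t(\mathbf{x})+\frac{\beta_t}{2}\|\mathbf{x}-\mathbf{x}_{t+1}\|_2^2$, Fermat's rule for the limiting subdifferential gives $0\in\partial\bigl(F_t+\frac{\beta_t}{2}\|\cdot-\mathbf{x}_{t+1}\|_2^2\bigr)(\mathbf{x}_t^\star)$. The quadratic is smooth, so the sum rule holds exactly:
\begin{equation*}
0\in \partial F_t(\mathbf{x}_t^\star)+\beta_t(\mathbf{x}_t^\star-\mathbf{x}_{t+1}).
\end{equation*}
This gives some $\mathbf{g}\in\partial F_t(\mathbf{x}_t^\star)$ with $\mathbf{x}_{t+1}-\mathbf{x}_t^\star=\mathbf{g}/\beta_t$.

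The main obstacle is matching this $\mathbf{g}$ to the specific $\mathbf{g}_t^\star$ controlled in Assumption~\ref{assump:wellposed}. When $F_t$ is nonconvex, $\partial F_t(\mathbf{x}_t^\star)$ may contain several elements, and the assumption only bounds one of them. My plan is to read Assumption~\ref{assump:wellposed} as bounding the subgradient supplied by the optimality condition, so that $\mathbf{g}_t^\star:=\beta_t(\mathbf{x}_{t+1}-\mathbf{x}_t^\star)$ and $\|\mathbf{g}_t^\star\|_2\le M_t$. An equivalent and cleaner route is to interpret $M_t$ as a bound on all subgradients at $\mathbf{x}_t^\star$, which is what local Lipschitz continuity of $F_t$ with constant $M_t$ would give. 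Under either reading, $\|\mathbf{x}_t^\star-\mathbf{x}_{t+1}\|_2\le M_t/\beta_t$. Combining this with the $\delta_t$ bound yields \eqref{eq:approx_transition_bound}.
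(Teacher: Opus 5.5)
Your proposal is correct and is essentially the paper's proof. It uses the same triangle-inequality split of $\mathbf{x}_t-\mathbf{x}_{t+1}$ around $\mathbf{x}_t^\star$, Assumption~\ref{assump:approx} for the neural term, and the optimality condition $0\in\partial F_t(\mathbf{x}_t^\star)+\beta_t(\mathbf{x}_t^\star-\mathbf{x}_{t+1})$ for the proximal displacement.

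Your concern about matching subgradients is justified. The paper simply names the optimality-condition subgradient $\mathbf{g}_t^\star$ and applies the bound $M_t$ to it, which is exactly your first reading of Assumption~\ref{assump:wellposed}; the literal ``there exists'' wording would not suffice. For $F_t=|\cdot|$ on $\mathbb{R}$, $\beta_t=1$, $\mathbf{x}_{t+1}=1$, one gets $\mathbf{x}_t^\star=0$ and $0\in\partial F_t(0)$, so $M_t=0$ is admissible, yet $\|\mathbf{x}_t^\star-\mathbf{x}_{t+1}\|_2=1$.
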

The proof is given in Appendix \ref{app:pf1}. 
% This result shows that the proposed trajectory constraint provides an explicit regularization for stabilizing the evolution of intermediate reconstruction states. 
{This result shows that the transition from $\mathbf{x}_{t+1}$ to $\mathbf{x}_t$ is explicitly controlled by $\beta_t$, up to the neural approximation error $\delta_t$. 
This matters because the final reconstruction is produced by a sequence of network updates. Large intermediate jumps can make the path sensitive to perturbations and optimization artifacts, while a controlled trajectory yields a more stable reconstruction process.}
\begin{remark}
The transition bound in Theorem~\ref{thm:approx} requires $\beta_t>0$. 
When $\beta_t=0$, the local objective contains no term penalizing 
$\|\mathbf{x}_t-\mathbf{x}_{t+1}\|_2$, and no analogous transition bound relative to $\mathbf{x}_{t+1}$ follows from the local optimality condition alone.
\end{remark}

\begin{theorem}\label{thm:global_stability}
Let $\{\tilde{\mathbf{x}}_t\}_{t=0}^T$ be the ideal proximal trajectory initialized from the same state as TRACE, i.e.,
\begin{equation}
\tilde{\mathbf{x}}_T=\mathbf{x}_T,
\qquad
\tilde{\mathbf{x}}_t
=
P_t(\tilde{\mathbf{x}}_{t+1}),
\quad
t=T-1,\ldots,0.
\end{equation}
Let $\{\mathbf{x}_t\}_{t=0}^T$ be the implemented TRACE trajectory with 
\begin{equation}
\mathbf{x}_T=\mathbf{x}_T, \qquad
\mathbf{x}_t=D_{\theta_t^K}(\mathbf{u}_t),\quad
t=T-1,\ldots,0.
\end{equation}
Assume Assumptions~\ref{assump:wellposed} and~\ref{assump:approx} hold, and assume that $F_t$ is $\rho_t$-weakly convex and $\beta_t>\rho_t$ for each $t$, then the final reconstruction error satisfies
\begin{equation}
\mathbb{E}\|\mathbf{x}_0-\tilde{\mathbf{x}}_0\|_2
\le
\sum_{s=0}^{T-1}
\delta_s
\prod_{i=0}^{s-1}
\frac{\beta_i}{\beta_i-\rho_i},
\label{eq:final_stability_bound}
\end{equation}
where the empty product is defined as $1$. 
More generally, for any intermediate state,
\begin{equation}
\mathbb{E}\|\mathbf{x}_t-\tilde{\mathbf{x}}_t\|_2
\le
\sum_{s=t}^{T-1}
\delta_s
\prod_{i=t}^{s-1}
\frac{\beta_i}{\beta_i-\rho_i},
\label{eq:trajectory_stability_bound}
\end{equation}
\end{theorem}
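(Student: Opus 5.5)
We prove the intermediate bound \eqref{eq:trajectory_stability_bound} by backward induction on $t$. The final bound \eqref{eq:final_stability_bound} is then the case $t=0$. We introduce the error $e_t:=\mathbb{E}\|\mathbf{x}_t-\tilde{\mathbf{x}}_t\|_2$. The base case is immediate: $\tilde{\mathbf{x}}_T=\mathbf{x}_T$, so $e_T=0$, and the right-hand side of \eqref{eq:trajectory_stability_bound} at $t=T$ is an empty sum. The whole argument rests on a one-step recursion
\begin{equation*}
e_t\le \delta_t+\frac{\beta_t}{\beta_t-\rho_t}\,e_{t+1}.
\end{equation*}
Unrolling it from $e_T=0$ gives exactly $\sum_{s=t}^{T-1}\delta_s\prod_{i=t}^{s-1}\beta_i/(\beta_i-\rho_i)$. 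This is a routine telescoping check: the coefficient of $\delta_s$ collects the factors from steps $t,\dots,s-1$.

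To get the recursion we insert the exact proximal point $\mathbf{x}_t^\star=P_t(\mathbf{x}_{t+1})$ computed from the \emph{implemented} predecessor, and use the triangle inequality
\begin{equation*}
\|\mathbf{x}_t-\tilde{\mathbf{x}}_t\|_2\le\|D_{\theta_t^K}(\mathbf{u}_t)-\mathbf{x}_t^\star\|_2+\|P_t(\mathbf{x}_{t+1})-P_t(\tilde{\mathbf{x}}_{t+1})\|_2 .
\end{equation*}
Taking expectations, the first term is at most $\delta_t$ by Assumption~\ref{assump:approx}. The second term needs a Lipschitz estimate for $P_t$.

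The key lemma is that for $\rho_t$-weakly convex $F_t$ and $\beta_t>\rho_t$, the map $\operatorname{prox}_{F_t/\beta_t}$ is single-valued and $\beta_t/(\beta_t-\rho_t)$-Lipschitz. To prove it, note that $F_t+\frac{\beta_t}{2}\|\cdot-\mathbf{v}\|^2$ is $(\beta_t-\rho_t)$-strongly convex, so the minimizer is unique; this also resolves any ambiguity in the choice of $P_t$ from Assumption~\ref{assump:wellposed}. Next, write the optimality conditions $\beta_t(\mathbf{v}-\mathbf{p})\in\partial F_t(\mathbf{p})$ and $\beta_t(\mathbf{w}-\mathbf{q})\in\partial F_t(\mathbf{q})$. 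Weak convexity gives hypomonotonicity of the subdifferential, $\langle \mathbf{g}-\mathbf{h},\mathbf{p}-\mathbf{q}\rangle\ge-\rho_t\|\mathbf{p}-\mathbf{q}\|^2$. Substituting the two conditions yields $\beta_t\langle \mathbf{v}-\mathbf{w},\mathbf{p}-\mathbf{q}\rangle\ge(\beta_t-\rho_t)\|\mathbf{p}-\mathbf{q}\|^2$, and Cauchy--Schwarz then gives $\|\mathbf{p}-\mathbf{q}\|\le\frac{\beta_t}{\beta_t-\rho_t}\|\mathbf{v}-\mathbf{w}\|$. Applying this with $\mathbf{v}=\mathbf{x}_{t+1}$ and $\mathbf{w}=\tilde{\mathbf{x}}_{t+1}$ and taking expectations bounds the second term by $\frac{\beta_t}{\beta_t-\rho_t}e_{t+1}$.

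The main obstacle is interpreting Assumption~\ref{assump:approx} correctly. The bound $\delta_t$ must hold with $\mathbf{x}_t^\star$ defined from the random, implemented $\mathbf{x}_{t+1}$, and the expectations must nest correctly over the randomness of all later steps. I would handle this by reading \eqref{eq:approx_error} conditionally on $\mathbf{x}_{t+1}$ and then applying the tower property, so that the expected errors compose linearly. The other technical point is the limiting subdifferential calculus for weakly convex, nonsmooth $F_t$. I would deal with it by noting that $F_t+\frac{\rho_t}{2}\|\cdot\|^2$ is convex, so the standard convex subgradient monotonicity applies after this shift.
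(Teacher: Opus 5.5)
Your proposal is correct and follows the paper's proof step for step: the same triangle inequality inserting $P_t(\mathbf{x}_{t+1})$, the same $\beta_t/(\beta_t-\rho_t)$-Lipschitz estimate for the proximal map, and the same recursion $e_t\le\delta_t+\frac{\beta_t}{\beta_t-\rho_t}e_{t+1}$ unrolled from $e_T=0$. Your only additions are details the paper leaves implicit: your hypomonotonicity proof of the Lipschitz bound (the paper simply asserts it from weak convexity), and your conditional reading of Assumption~\ref{assump:approx} combined with the tower property.
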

The proof is given in Appendix \ref{app:pf2}. 
% Theorem~\ref{thm:global_stability} shows that neural approximation errors propagate along the trajectory through the factors 
% \(\beta_t/(\beta_t-\rho_t)\). 
% When \(F_t\) is convex, \(\rho_t=0\), and the bound reduces to additive accumulation.
% Hence, we conclude that as long as each local neural update remains close to the corresponding ideal proximal update, the final reconstruction remains close to the ideal trajectory.
% \sai{This is an interesting stability result although with strong assumptions. Nice. My main concern is - is there some way to prevent error from just accumulating additively by some mechanism? Then it would be very interesting. Can you motivate weakly convex assumption for any standard NN prior?}
{Theorem~\ref{thm:global_stability} compares the implemented TRACE trajectory with the ideal proximal trajectory. 
It shows that the error introduced at each neural update is propagated through the factors
$\frac{\beta_t}{\beta_t-\rho_t}$.
When \(F_t\) is convex, we have \(\rho_t=0\), and these factors reduce to one. 
In this case, the trajectory error accumulates additively.
The weak convexity assumption is imposed only on the ideal image-domain surrogate \(F_t\), not on the nonconvex neural parameterization. 
It can be viewed as a local bounded-negative-curvature condition along the reconstruction trajectory. 
The neural network is treated as an approximate solver for the ideal proximal step, and its deviation is captured by \(\delta_t\).
}
\begin{corollary}
\label{cor:error_damping}
Suppose that, in addition to the assumptions of Theorem~\ref{thm:global_stability}, \(F_t\) is locally \(\mu_t\)-strongly convex along the reconstruction trajectory, with \(\mu_t>0\). 
Then the ideal proximal map satisfies
\begin{equation}
\|P_t(\mathbf{a})-P_t(\mathbf{b})\|_2
\le
q_t
\|\mathbf{a}-\mathbf{b}\|_2,
\qquad
q_t=
\frac{\beta_t}{\beta_t+\mu_t}
<1 .
\end{equation}
Consequently,
\begin{equation}
\mathbb{E}\|\mathbf{x}_t-\tilde{\mathbf{x}}_t\|_2
\le
\sum_{s=t}^{T-1}
\delta_s
\prod_{i=t}^{s-1}q_i .
\label{eq:contractive_stability_bound}
\end{equation}
In particular, if \(q_i\le q<1\) and \(\delta_i\le \delta\), then
\begin{equation}
\mathbb{E}\|\mathbf{x}_0-\tilde{\mathbf{x}}_0\|_2
\le
\delta
\frac{1-q^T}{1-q}.
\end{equation}
\end{corollary}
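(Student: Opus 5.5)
The plan has two parts. First we prove the contraction estimate for $P_t$ from the optimality conditions of the proximal subproblem \eqref{eq:prox_subproblem}. Then we repeat the error-propagation argument behind Theorem~\ref{thm:global_stability}, with the Lipschitz factor $\beta_i/(\beta_i-\rho_i)$ replaced by $q_i$.

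\emph{Step 1 (contraction).} Let $\mathbf{p}=P_t(\mathbf{a})$ and $\mathbf{r}=P_t(\mathbf{b})$. The first-order conditions of \eqref{eq:prox_subproblem} give $\beta_t(\mathbf{a}-\mathbf{p})\in\partial F_t(\mathbf{p})$ and $\beta_t(\mathbf{b}-\mathbf{r})\in\partial F_t(\mathbf{r})$. Local $\mu_t$-strong convexity along the trajectory (taken to hold on a region containing $\mathbf{p}$ and $\mathbf{r}$) yields strong monotonicity of $\partial F_t$ there:
\begin{equation*}
\beta_t\langle (\mathbf{a}-\mathbf{b})-(\mathbf{p}-\mathbf{r}),\,\mathbf{p}-\mathbf{r}\rangle\ge \mu_t\|\mathbf{p}-\mathbf{r}\|_2^2 .
\end{equation*}
Rearranging gives $(\beta_t+\mu_t)\|\mathbf{p}-\mathbf{r}\|_2^2\le \beta_t\langle \mathbf{a}-\mathbf{b},\mathbf{p}-\mathbf{r}\rangle$. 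Cauchy--Schwarz then gives $\|\mathbf{p}-\mathbf{r}\|_2\le q_t\|\mathbf{a}-\mathbf{b}\|_2$, and $q_t<1$ because $\mu_t>0$.

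\emph{Step 2 (one-step recursion).} Define $e_t=\mathbb{E}\|\mathbf{x}_t-\tilde{\mathbf{x}}_t\|_2$, so $e_T=0$. Since $\mathbf{x}_t^\star=P_t(\mathbf{x}_{t+1})$, the triangle inequality gives
\begin{equation*}
\|\mathbf{x}_t-\tilde{\mathbf{x}}_t\|_2\le \|D_{\theta_t^K}(\mathbf{u}_t)-P_t(\mathbf{x}_{t+1})\|_2+\|P_t(\mathbf{x}_{t+1})-P_t(\tilde{\mathbf{x}}_{t+1})\|_2 .
\end{equation*}
Taking expectations, we apply Assumption~\ref{assump:approx} to the first term and Step 1 to the second. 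This yields $e_t\le \delta_t+q_te_{t+1}$.

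\emph{Step 3 (unrolling).} We unroll the recursion from $e_T=0$ by backward induction on $t$. The inductive step uses $\delta_t+q_t\sum_{s=t+1}^{T-1}\delta_s\prod_{i=t+1}^{s-1}q_i=\sum_{s=t}^{T-1}\delta_s\prod_{i=t}^{s-1}q_i$, which is exactly \eqref{eq:contractive_stability_bound}. For the special case, each product is at most $q^{s}$. Setting $t=0$ and summing the geometric series gives $\delta(1-q^T)/(1-q)$.

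\emph{Main obstacle.} The delicate point is making Step 1 legitimate. Strong convexity is assumed only locally, so we must ensure that both proximal points, and the segment between them, lie in the region where strong monotonicity holds. We would state this as part of the interpretation of ``along the reconstruction trajectory.'' We would also note that $\beta_t>\rho_t$, inherited from Theorem~\ref{thm:global_stability}, makes the subproblem strongly convex, so $P_t$ is single-valued and the optimality conditions are both necessary and sufficient.

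A second subtlety is measurability and conditioning in Step 2, since $\mathbf{x}_{t+1}$ is itself random. We would read Assumption~\ref{assump:approx} as a bound holding in total expectation, which is how Theorem~\ref{thm:global_stability} is presumably proved. The expectation then passes through the deterministic Lipschitz bound without further work.
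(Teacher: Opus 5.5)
Your proposal is correct and follows the route the paper intends: the paper gives no separate proof of the corollary, which is meant to come from rerunning the recursion in the proof of Theorem~\ref{thm:global_stability} with $\beta_i/(\beta_i-\rho_i)$ replaced by $q_i$, exactly as in your Steps 2--3. Your Step 1 derivation of the contraction from strong monotonicity of $\partial F_t$ supplies a fact the paper only asserts. Your requirement that both proximal points lie in a convex region where strong convexity holds is the right way to make ``locally \ldots along the reconstruction trajectory'' precise, since the contraction cannot hold for arbitrary $\mathbf{a},\mathbf{b}$ under only local strong convexity.
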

Corollary~\ref{cor:error_damping} describes a favorable case in which the ideal proximal update is contractive. 
In this regime, errors from earlier neural updates are damped by later proximal steps instead of being only accumulated. 
Thus, Theorem~\ref{thm:global_stability} gives a general weakly convex stability bound, while Corollary~\ref{cor:error_damping} shows that stronger local curvature can further suppress error propagation.

\section{Experiments}

We evaluate TRACE on a range of inverse problems to test whether explicitly
modeling the reconstruction trajectory leads to more stable and reliable
reconstructions. The experiments cover natural-image restoration tasks, including
inpainting, super-resolution, and deblurring, as well as X-ray computed tomography (CT)
reconstruction under sparse-view and limited-angle settings. These problems differ in
the type and severity of information loss, allowing us to examine the behavior
of TRACE across both standard image restoration settings and highly ill-posed
medical imaging problems.

Our evaluation is organized around three questions. First, we compare TRACE with
representative untrained, self-guided, and diffusion-based inverse solvers in
terms of reconstruction quality. Second, we analyze the reconstruction trajectory
itself to verify whether the proposed temporal coupling indeed controls the
evolution between adjacent states. Third, we conduct ablation and sensitivity
studies to understand the roles of temporal coupling. We also test whether the same trajectory regularization
can be added to an existing diffusion-based inverse solver.

Additional results are provided in the appendix, including extended qualitative
comparisons, CT results under both sparse-view and limited-angle settings, ablation study of temporal coupling, stochastic perturbation, weight inheritance,
sensitivity to the numbers of outer and inner iterations, and experiments with
pretrained initialization.

\subsection{Experimental Setup}

\paragraph{Tasks.}
We consider four representative classes of inverse problems. For image
inpainting, we use random masks with $50\%$ and $70\%$ missing pixels. For image
super-resolution, we evaluate $\times2$ and $\times4$ downsampling. For image
deblurring, we consider both motion blur and nonlinear blur. For CT
reconstruction, we evaluate sparse-view and limited-angle tomography. In the
sparse-view setting, projections are taken over the full angular range with a
limited number of views, while in the limited-angle setting, measurements are
restricted to a contiguous angular range. 

\paragraph{Baselines.}
We compare TRACE with representative inverse solvers from several categories:
DIP~\cite{ulyanov2018dip}, ASeqDIP~\cite{alkhouri2024image}, Self-Guided
DIP~\cite{liang2025analysis}, SDI~\cite{luo2025selfdiffusion}, and
UGoDIT~\cite{liangugodit}. For CT reconstruction, we additionally include
uDiG-DIP~\cite{liang2025sequential}, which uses a pretrained diffusion prior.
To examine whether the proposed trajectory regularization is tied to our
specific implementation, we also add the same temporal coupling term to
SITCOM~\cite{alkhouri2025sitcom}, a pretrained diffusion-based inverse solver.

\paragraph{Implementation details.}
For each task, all methods are evaluated using the same measurements, forward
operators, and degradation settings. We use the same or comparable optimization
budgets whenever the formulations are comparable. For TRACE, $T$ denotes the
number of trajectory states, and $K$ denotes the number of inner optimization
steps used at each state. Unless otherwise specified, TRACE uses $T=40$ and
$K=150$. Additional details, including network architectures, optimizer
settings, learning rates, schedules for temporal coupling $\{\beta_t\}$ and
stochastic perturbation $\{\sigma_t\}$, runtime information, and baseline
configurations, are provided in Appendix~\ref{app:algo}.

\begin{table}[t]
\centering
\small
\setlength{\tabcolsep}{4.2pt}
\renewcommand{\arraystretch}{1.10}
\caption{
Average results on natural-image inverse problems.
Best and second-best values in each row are shown in bold and underlined, respectively.
Higher is better for PSNR and SSIM, while lower is better for LPIPS.
UGoDIT$^{*}$ uses transferable weights learned from unsupervised group training.
}
\label{tab:natural_main}
\begin{tabular}{lllcccccc}
\toprule
Tasks & Setting & Metric & UGoDIT$^{*}$ & Self-Guided DIP & ASeqDIP & DIP & SDI & \textbf{TRACE} \\
\midrule

\multirow{6}{*}{Inpainting}
& \multirow{3}{*}{Missing 50\%}
& PSNR  & 30.93 & \best{33.91} & 31.84 & 29.08 & 32.53 & \second{33.77} \\
& & SSIM  & 0.894 & \best{0.951} & 0.919 & 0.832 & 0.938 & \second{0.948} \\
& & LPIPS & 0.0778 & 0.0337 & 0.0773 & 0.2299 & \best{0.0315} & \second{0.0316} \\
\cmidrule(lr){2-9}

& \multirow{3}{*}{Missing 70\%}
& PSNR  & 28.96 & \second{30.61} & 29.14 & 27.94 & 29.27 & \best{30.68} \\
& & SSIM  & 0.864 & \best{0.909} & 0.871 & 0.805 & 0.886 & \second{0.903} \\
& & LPIPS & 0.0938 & 0.0787 & 0.1166 & 0.2499 & \second{0.0600} & \best{0.0553} \\
\midrule

\multirow{6}{*}{SR}
& \multirow{3}{*}{$\times 2$}
& PSNR  & 29.63 & 28.42 & 28.51 & 26.98 & \second{31.28} & \best{32.01} \\
& & SSIM  & 0.872 & 0.777 & 0.827 & 0.748 & \second{0.920} & \best{0.931} \\
& & LPIPS & 0.0926 & 0.1374 & 0.2519 & 0.3285 & \second{0.0772} & \best{0.0755} \\
\cmidrule(lr){2-9}

& \multirow{3}{*}{$\times 4$}
& PSNR  & 25.91 & 22.26 & 25.88 & 26.11 & \second{26.32} & \best{27.47} \\
& & SSIM  & 0.755 & 0.478 & 0.744 & 0.723 & \second{0.783} & \best{0.817} \\
& & LPIPS & \best{0.2373} & 0.5732 & 0.3500 & 0.3828 & 0.2713 & \second{0.2484} \\
\midrule

\multirow{6}{*}{Deblurring}
& \multirow{3}{*}{Motion}
& PSNR  & 25.24 & 29.21 & 24.58 & 24.36 & \second{29.25} & \best{29.94} \\
& & SSIM  & 0.696 & 0.829 & 0.654 & 0.652 & \second{0.832} & \best{0.863} \\
& & LPIPS & 0.2968 & \best{0.1760} & 0.4140 & 0.4522 & 0.1899 & \second{0.1840} \\
\cmidrule(lr){2-9}

& \multirow{3}{*}{Nonlinear}
& PSNR  & 23.29 & 23.11 & 22.85 & 23.59 & \second{24.69} & \best{25.52} \\
& & SSIM  & 0.600 & 0.492 & 0.569 & 0.612 & \second{0.665} & \best{0.708} \\
& & LPIPS & 0.4323 & 0.5298 & 0.4977 & 0.5192 & \second{0.4156} & \best{0.3986} \\
\bottomrule
\end{tabular}
\end{table}

\subsection{Natural-Image Inverse Problems}

We first evaluate TRACE on natural-image inverse problems, including inpainting,
super-resolution, and deblurring. These tasks represent different forms of
information loss: missing pixels, reduced spatial resolution, and blur-induced
loss of high-frequency content. We randomly test 10 images from dataset BSDS68 and 4 images from dataset Set14. 
Table~\ref{tab:natural_main} reports the
average PSNR, SSIM, and LPIPS results.

Overall, TRACE performs consistently well across these restoration tasks. It is
ranked first or second in all reported settings, with especially clear gains in
the more challenging regimes, such as $70\%$ missing inpainting, $\times4$
super-resolution, and nonlinear deblurring. These results suggest that
trajectory-constrained reconstruction is most beneficial when the inverse
problem becomes more ill-posed and unconstrained optimization is more likely to
produce unstable artifacts.

\paragraph{Image inpainting.}
For image inpainting, we evaluate random masking with $50\%$ and $70\%$ missing
pixels. Under the moderate $50\%$ missing setting, TRACE remains highly
competitive and achieves results close to the best-performing method across all
metrics. Under the more severe $70\%$ missing setting, TRACE shows a clearer
advantage, achieving the best PSNR and LPIPS and the second-best SSIM. As shown
in Table~\ref{tab:natural_main}, TRACE improves PSNR from 29.27 dB with SDI to
30.68 dB and reduces LPIPS from 0.0600 to 0.0553. These results suggest that
explicitly controlling the reconstruction trajectory is particularly useful when
large image regions are unobserved. Additional qualitative comparisons are
provided in Appendix~\ref{app:inpainting}.
% Figure~\ref{fig:inpaint70_example2} shows a representative $70\%$ missing
% inpainting example. TRACE achieves the highest PSNR and SSIM on this image,
% showing improved pixel-level accuracy and structural fidelity. Compared with DIP
% and ASeqDIP, TRACE produces fewer artifacts caused by fitting the incomplete
% observation. Compared with stronger baselines such as Self-Guided and SDI,
% TRACE better preserves the main image structures. Although LPIPS is not the
% lowest on this particular example, the averaged results in
% Table~\ref{tab:natural_main} show that TRACE achieves the best LPIPS under the
% $70\%$ missing setting.

\newcommand{\imgcell}[3]{
\begin{minipage}[t]{0.235\linewidth}
    \centering
    \includegraphics[width=\linewidth]{#1}\\[-0.15em]
    {\scriptsize #2}\\[-0.15em]
    {\scriptsize #3}
\end{minipage}
}

\paragraph{Super-resolution.}
For image super-resolution, we evaluate $\times2$ and $\times4$ downsampling.
Table~\ref{tab:natural_main} shows that TRACE achieves the best PSNR and SSIM
under both scaling factors. The improvement is especially clear for $\times4$
super-resolution, where TRACE improves PSNR from 26.32 dB to 27.47 dB and SSIM
from 0.783 to 0.817 over the strongest baseline. TRACE also achieves the best
LPIPS under $\times2$ and the second-best LPIPS under $\times4$. These results
indicate that trajectory-constrained reconstruction is effective for recovering
structures from low-resolution observations. Additional qualitative comparisons
are provided in Appendix~\ref{app:sr}.

% Figure~\ref{fig:sr4_example} shows a representative $\times4$ super-resolution
% example. TRACE achieves the highest PSNR and SSIM among all compared methods,
% indicating stronger pixel-level and structural fidelity. Although UGoDIT obtains
% a lower LPIPS on this particular image, TRACE provides the best distortion-based
% accuracy and preserves image structures more faithfully.

\begin{figure}[t]
\centering
\setlength{\tabcolsep}{2pt}
\renewcommand{\arraystretch}{1.0}

\begin{tabular}{cccc}
\imgcell{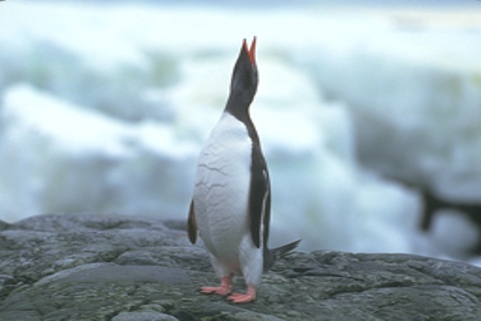}
{(a) GT}
{ }
&
\imgcell{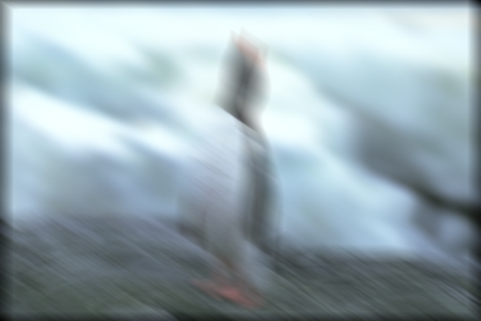}
{(b) Observation}
{21.42 / 0.732 / 0.3768}
&
\imgcell{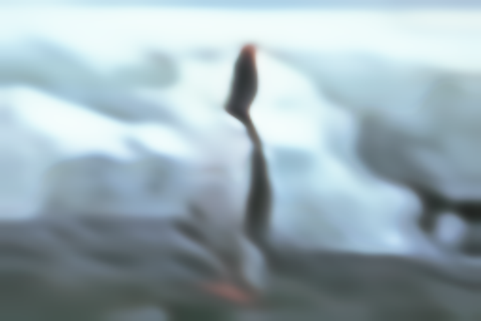}
{(c) DIP}
{27.15 / 0.759 / 0.3633}
&
\imgcell{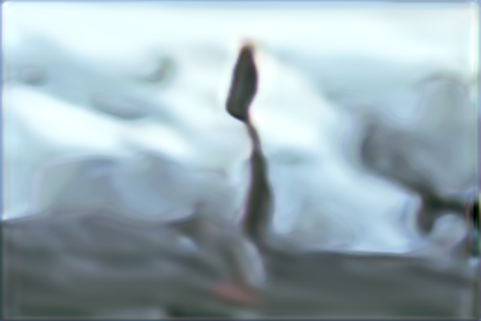}
{(d) ASeqDIP}
{26.59 / 0.730 / 0.3842}
\\[1.0em]

\imgcell{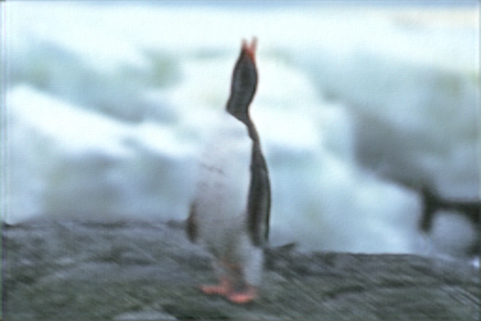}
{(e) Self-Guided DIP}
{31.01 / 0.858 / \textbf{0.1453}}
&
\imgcell{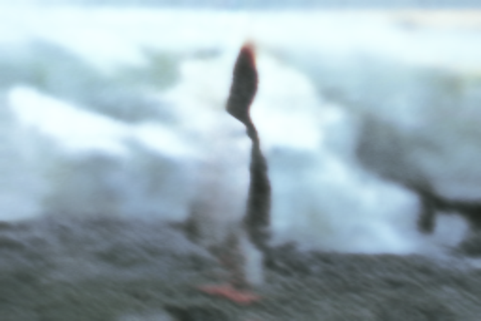}
{(f) UGoDIT}
{28.39 / 0.796 / 0.2576}
&
\imgcell{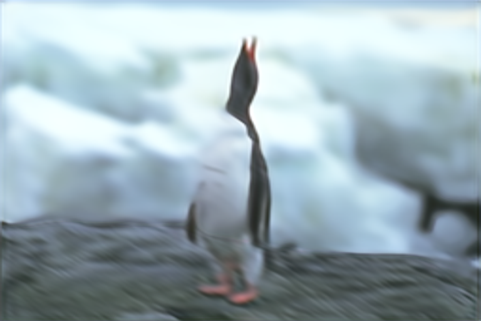}
{(g) SDI}
{31.22 / 0.867 / 0.1843}
&
\imgcell{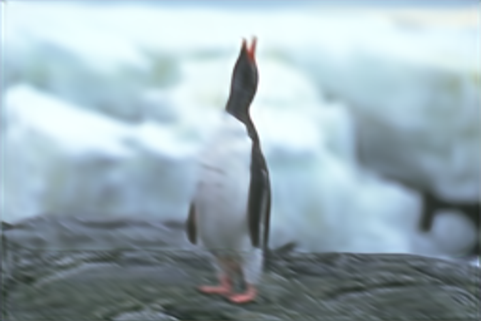}
{(h) \textbf{TRACE}}
{\textbf{32.68} / \textbf{0.900} / 0.1541}
\end{tabular}

\vspace{0.35em}
\caption{
Representative motion deblurring result.
The numbers below each image denote PSNR$\uparrow$ / SSIM$\uparrow$ /
LPIPS$\downarrow$. TRACE achieves the highest PSNR and SSIM on this example,
showing stronger structural recovery under motion blur.
}
\label{fig:motion_blur_example}
\end{figure}
\paragraph{Image deblurring.}
For image deblurring, we evaluate both motion blur and nonlinear blur.
Table~\ref{tab:natural_main} shows that TRACE achieves the best PSNR and SSIM
under motion blur, improving PSNR from 29.25 dB with SDI to 29.94 dB and SSIM
from 0.832 to 0.863. Although Self-Guided DIP obtains the lowest LPIPS in this
setting, TRACE achieves the second-best LPIPS while providing the best
distortion-based reconstruction quality. Under nonlinear blur, TRACE achieves
the best performance across all three metrics, improving PSNR from 24.69 dB to
25.52 dB and SSIM from 0.665 to 0.708 over the strongest baseline.

Figure~\ref{fig:motion_blur_example} shows a representative motion deblurring
example. Compared with DIP and ASeqDIP, TRACE produces sharper structures and
fewer blur-related artifacts. Compared with stronger baselines such as
Self-Guided DIP and SDI, TRACE better preserves structural details and avoids
unstable high-frequency artifacts. This supports the role of trajectory
regularization in ill-conditioned restoration problems, where direct
optimization can easily lead to ringing or overfitting artifacts. Additional qualitative comparisons for motion deblurring are provided in
Appendix~\ref{app:deblurring}.

\newcommand{\ctcell}[3]{
\begin{minipage}[t]{0.132\textwidth}
    \centering
    \includegraphics[width=\linewidth]{#1}\\[-0.15em]
    {\scriptsize #2}\\[-0.15em]
    {\scriptsize #3}
\end{minipage}
}
\subsection{CT Reconstruction}

We further evaluate TRACE on sparse-view and limited-angle CT reconstruction,
where the projection data are highly incomplete and the inverse problem is
severely ill-posed. This setting is particularly useful for testing whether
trajectory-constrained reconstruction can remain stable under strong measurement
deficiency. The detailed settings are given in Appendix~\ref{app:ct}.

\begin{table}[t]
\centering
\small
\setlength{\tabcolsep}{5pt}
\renewcommand{\arraystretch}{1.10}
\caption{
Average results on CT reconstruction tasks.
Best and second-best values in each row are shown in bold and underlined, respectively.
Higher is better for both PSNR and SSIM.
uDiG-DIP$^{\dagger}$ uses a pretrained diffusion prior.
}
\label{tab:ct_main}
\begin{tabular}{llcccccc}
\toprule
Setting & Metric & Self-Guided DIP & ASeqDIP & DIP & SDI & uDiG-DIP$^{\dagger}$ & \textbf{TRACE} \\
\midrule
\multirow{2}{*}{Sparse-view}
& PSNR & 27.43 & 27.15 & 25.25 & 24.36 & \second{27.79} & \best{29.55} \\
& SSIM & 0.809 & \second{0.822} & 0.691 & 0.789 & 0.807 & \best{0.894} \\
\midrule
\multirow{2}{*}{Limited-angle}
& PSNR & \second{26.33} & 25.64 & 23.36 & 25.36 & 25.12 & \best{27.21} \\
& SSIM & \second{0.782} & 0.760 & 0.626 & 0.768 & 0.733 & \best{0.823} \\
\bottomrule
\end{tabular}
\end{table}

Table~\ref{tab:ct_main} reports the average quantitative results. TRACE achieves
the best PSNR and SSIM in both CT settings. For sparse-view CT, TRACE improves
PSNR from 27.79 dB, achieved by the second-best method uDiG-DIP, to 29.55 dB,
and improves SSIM from 0.822 to 0.894. For limited-angle CT, TRACE improves
PSNR from 26.33 dB to 27.21 dB and SSIM from 0.782 to 0.823 over the strongest
baseline. These gains are larger than those observed in many natural-image
settings, suggesting that trajectory control is especially helpful when the
measurement model is highly underdetermined. 

\begin{figure*}[t]
\centering
\setlength{\tabcolsep}{1.5pt}
\renewcommand{\arraystretch}{1.0}

% ================= Sparse-view CT =================
\textbf{Sparse-view CT}\par\vspace{0.45em}

\begin{tabular}{@{}ccccccc@{}}
\ctcell{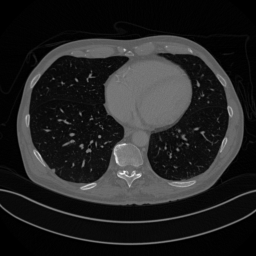}
{GT}
{}
&
\ctcell{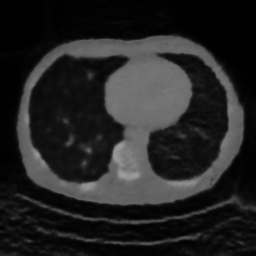}
{DIP}
{25.89 / 0.671}
&
\ctcell{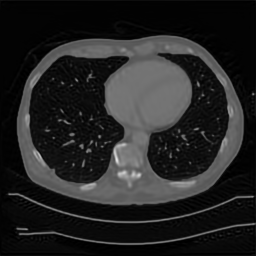}
{ASeqDIP}
{28.92 / 0.861}
&
\ctcell{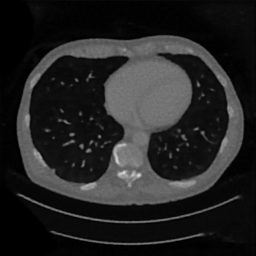}
{Self-Guided DIP}
{30.11 / 0.848}
&
\ctcell{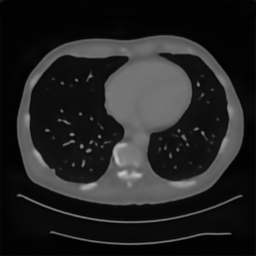}
{SDI}
{29.71 / 0.862}
&
\ctcell{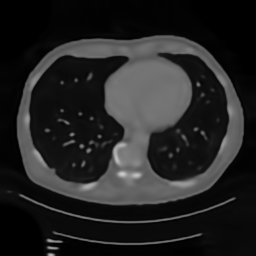}
{uDiG-DIP}
{28.35 / 0.820}
&
\ctcell{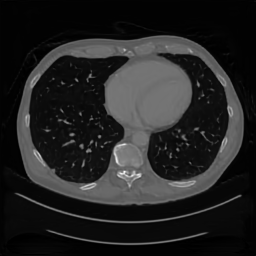}
{\textbf{TRACE}}
{\textbf{32.01} / \textbf{0.922}}
\end{tabular}

\vspace{0.9em}

% ================= Limited-angle CT =================
\textbf{Limited-angle CT}\par\vspace{0.45em}

\begin{tabular}{@{}ccccccc@{}}
\ctcell{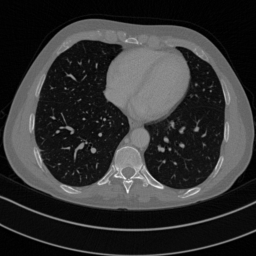}
{GT}
{}
&
\ctcell{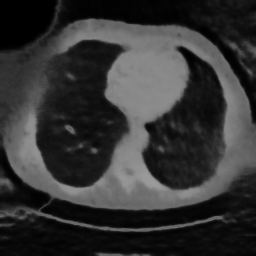}
{DIP}
{21.51 / 0.573}
&
\ctcell{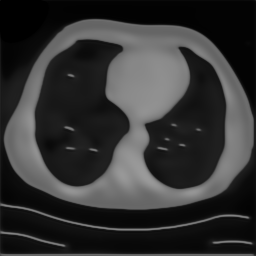}
{ASeqDIP}
{22.25 / 0.657}
&
\ctcell{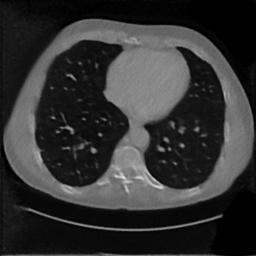}
{Self-Guided DIP}
{24.58 / 0.730}
&
\ctcell{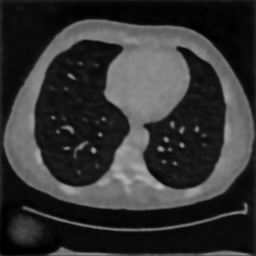}
{SDI}
{23.41 / 0.698}
&
\ctcell{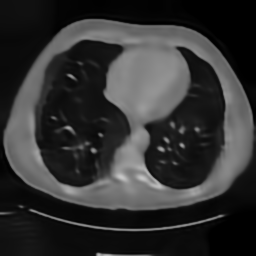}
{uDiG-DIP}
{24.47 / 0.687}
&
\ctcell{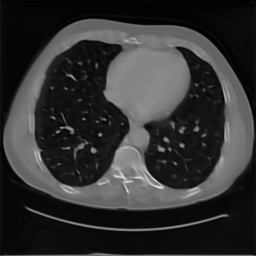}
{\textbf{TRACE}}
{\textbf{25.75} / \textbf{0.824}}
\end{tabular}

\vspace{0.35em}
\caption{
Representative CT reconstruction results under sparse-view and limited-angle
sampling. The numbers below each reconstruction denote PSNR$\uparrow$ /
SSIM$\uparrow$. TRACE achieves the highest PSNR and SSIM in both examples,
showing stronger suppression of streak artifacts and more coherent structural
recovery under severe measurement deficiency.
}
\label{fig:ct_qual}
\end{figure*}

Figure~\ref{fig:ct_qual} shows representative CT reconstructions. In both
sparse-view and limited-angle cases, baseline methods suffer from visible streak
artifacts, blurred structures, or unstable local textures caused by missing
projection data. TRACE suppresses these artifacts more effectively and recovers
more coherent anatomical structures. In the displayed sparse-view case, TRACE
achieves 32.01 dB PSNR and 0.922 SSIM, clearly outperforming all baselines. In
the limited-angle case, TRACE also achieves the highest PSNR and SSIM, showing
stronger robustness to missing angular measurements. These visual results are
consistent with the average improvements in Table~\ref{tab:ct_main}. Additional CT visual comparisons and experimental details are provided in
Appendix~\ref{app:ct}.

\subsection{Trajectory Stability}

Beyond final reconstruction quality, we further examine the stability of the
reconstruction trajectory. Since TRACE explicitly constrains the evolution
between adjacent reconstruction states, we measure the state variation
\begin{equation}
    \Delta_t = \|\mathbf{x}_t-\mathbf{x}_{t+1}\|_2
\end{equation}
along the reconstruction process. We also report the weighted transition term
$\beta_t \Delta_t$, which directly corresponds to the temporal coupling term in
our objective.

\begin{figure*}[t]
\centering
\setlength{\tabcolsep}{2pt}
\captionsetup[subfigure]{font=scriptsize,skip=1pt}

\begin{subfigure}[t]{0.41\textwidth}
    \centering
    \includegraphics[width=\linewidth, trim=6 4 8 6, clip]{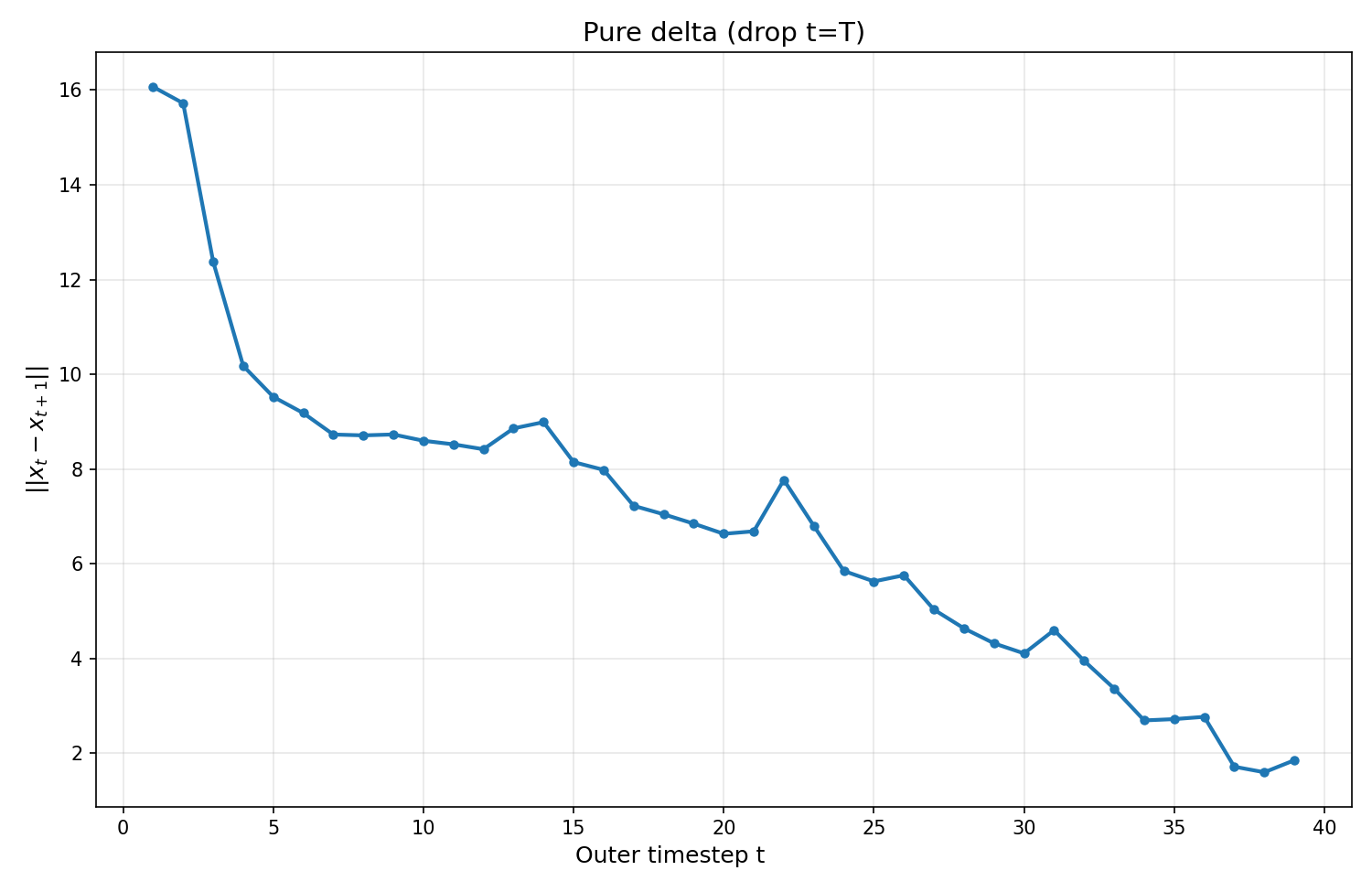}
    \caption{$\Delta_t=\|\mathbf{x}_t-\mathbf{x}_{t+1}\|_2$}
    \label{fig:traj_delta}
\end{subfigure}
\hspace{0.035\textwidth}
\begin{subfigure}[t]{0.41\textwidth}
    \centering
    \includegraphics[width=\linewidth, trim=6 4 8 6, clip]{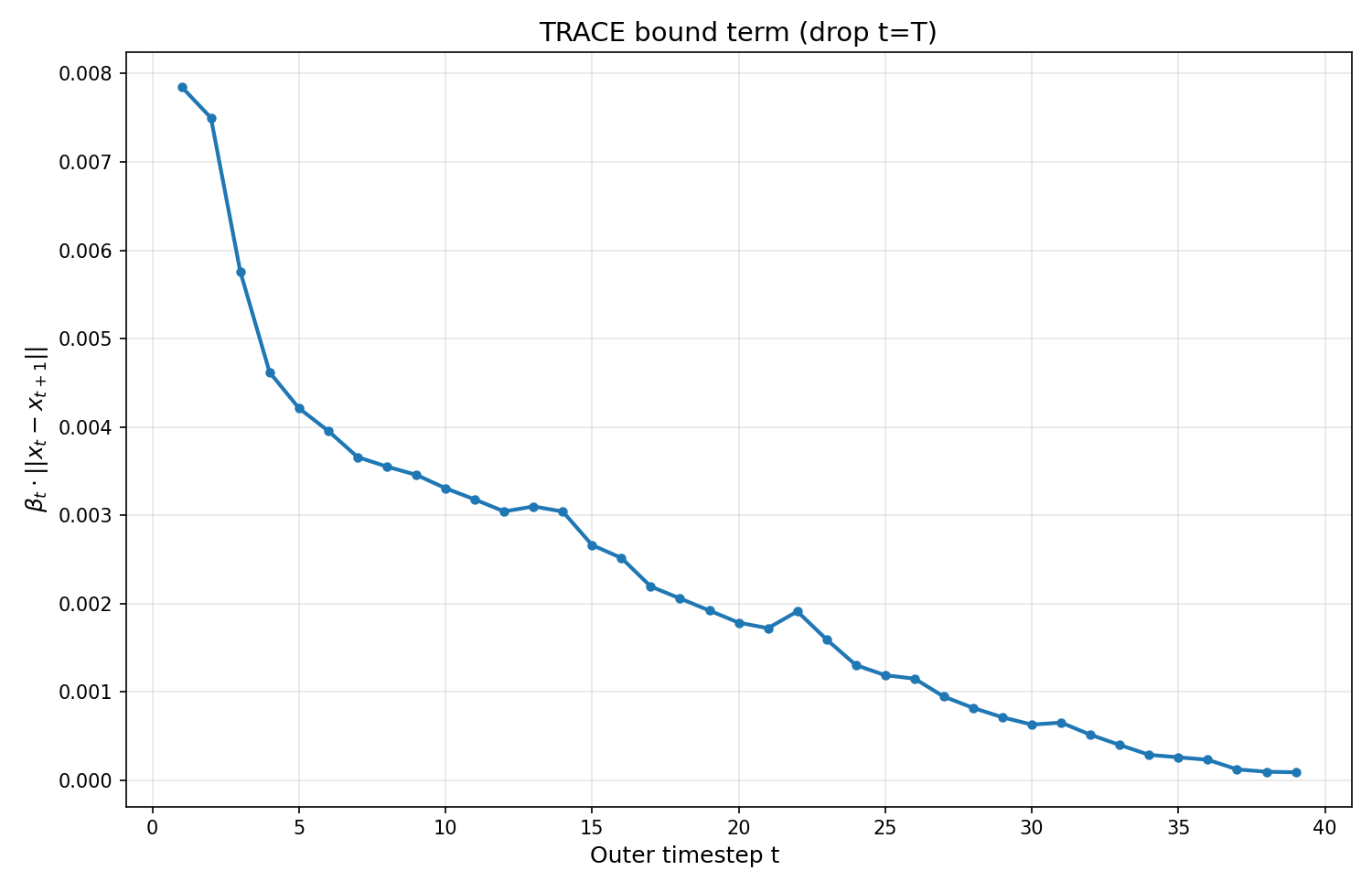}
    \caption{$\beta_t\Delta_t$}
    \label{fig:traj_beta_delta}
\end{subfigure}

\vspace{0.35em}

\begin{subfigure}[t]{0.41\textwidth}
    \centering
    \includegraphics[width=\linewidth, trim=6 4 8 6, clip]{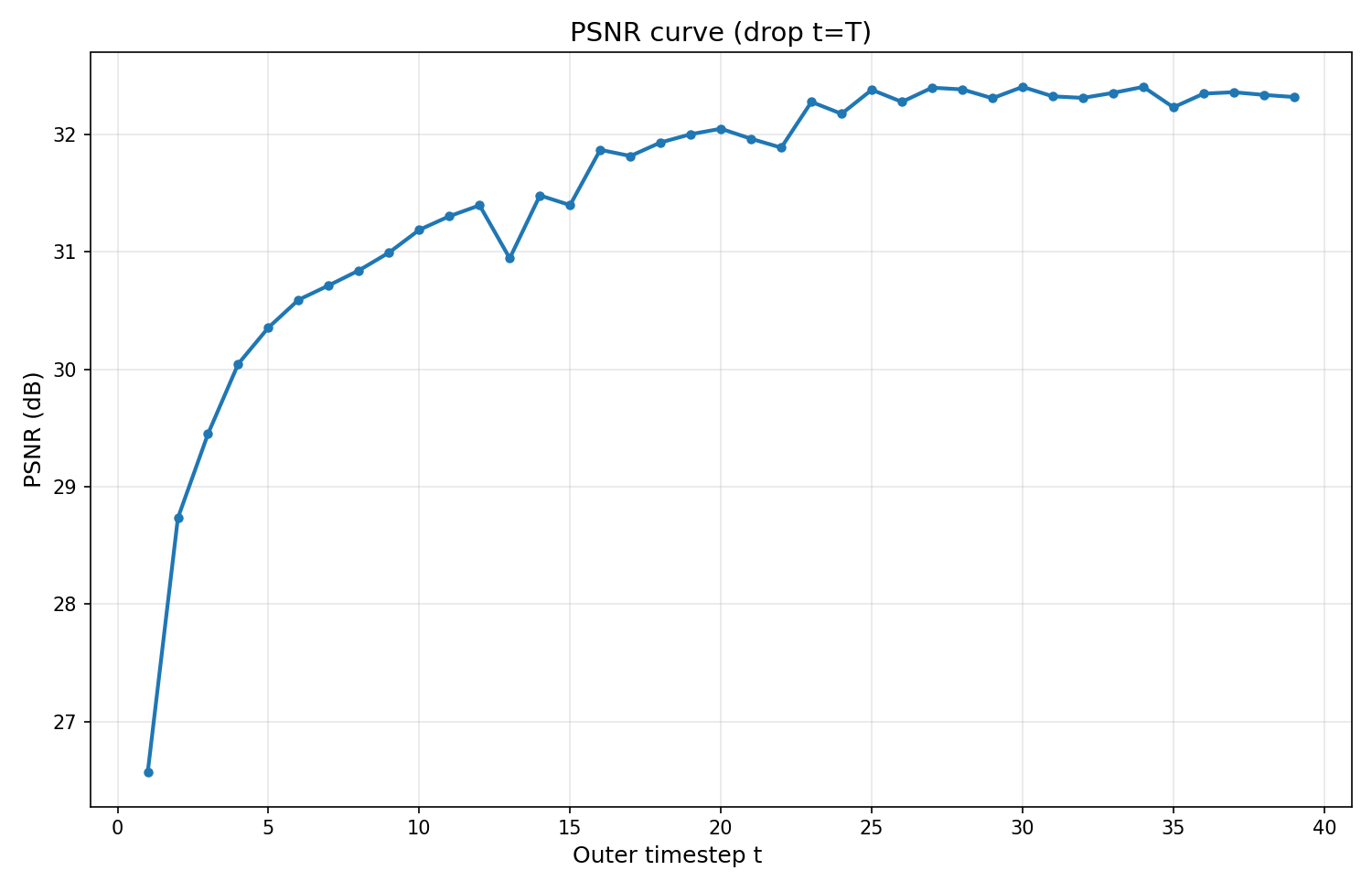}
    \caption{PSNR}
    \label{fig:traj_psnr}
\end{subfigure}
\hspace{0.035\textwidth}
\begin{subfigure}[t]{0.41\textwidth}
    \centering
    \includegraphics[width=\linewidth, trim=6 4 8 6, clip]{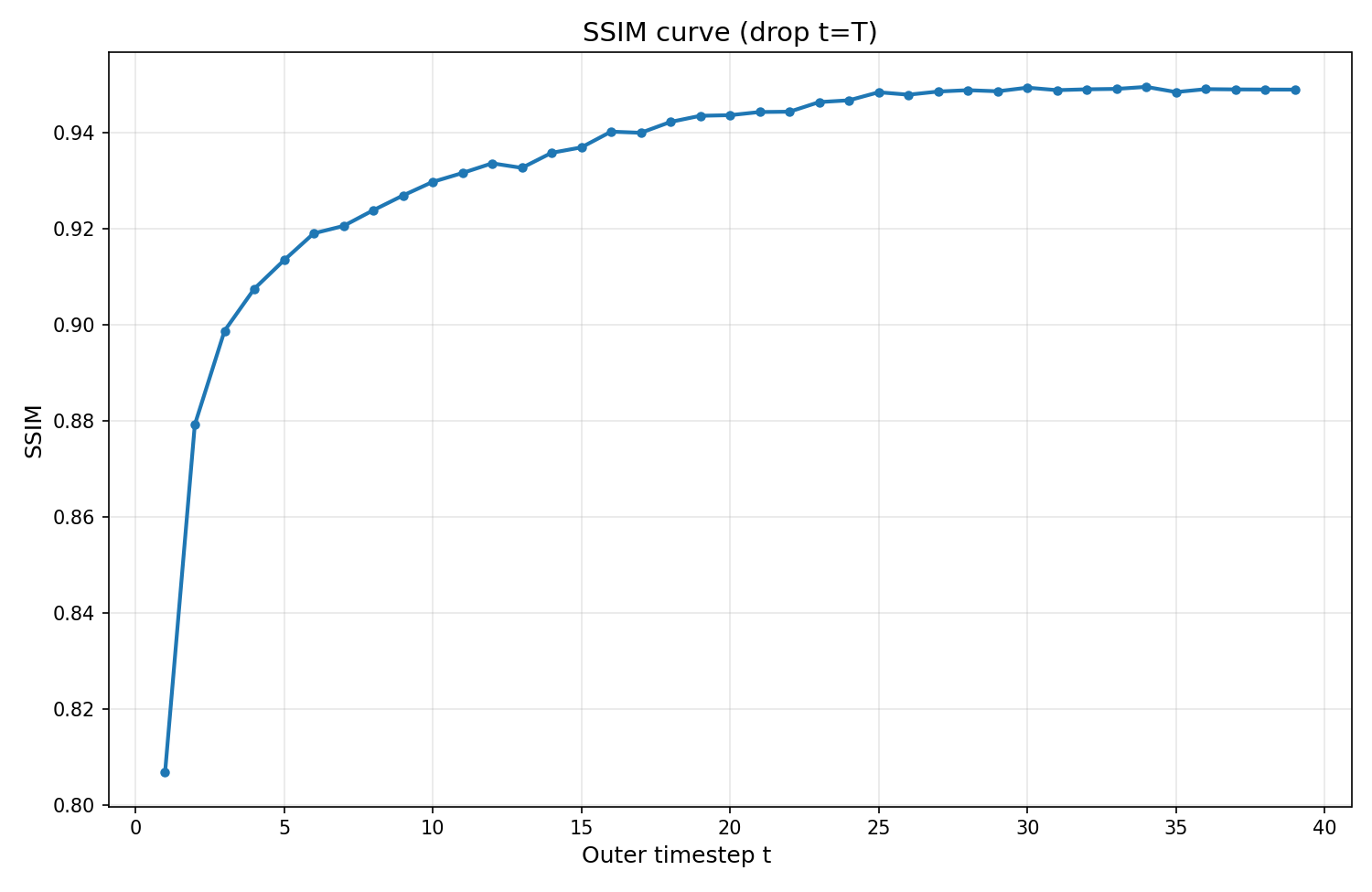}
    \caption{SSIM}
    \label{fig:traj_ssim}
\end{subfigure}

\vspace{0.25em}
\caption{
Trajectory stability analysis. TRACE produces smoothly decaying state
transitions while PSNR and SSIM steadily improve, indicating stable
reconstruction dynamics without abrupt state changes.
}
\label{fig:trajectory_stability}
\end{figure*}

Figure~\ref{fig:trajectory_stability} shows the evolution of the reconstruction
trajectory. The state variation $\Delta_t$ decreases smoothly over the
trajectory, indicating that adjacent reconstruction states become progressively
more consistent. The weighted transition term $\beta_t\Delta_t$ also remains
bounded and gradually decays, which is consistent with the role of temporal
coupling in controlling state transitions.

At the same time, the PSNR and SSIM curves increase steadily along the
trajectory. This suggests that TRACE does not stabilize the trajectory by simply
freezing the reconstruction; instead, it maintains a stable evolution while
continuing to improve reconstruction quality. These results empirically support
our central claim that explicitly modeling the reconstruction trajectory leads
to a more stable and reliable inverse solver.

\subsection{Effect of Temporal Coupling}

\begin{wraptable}{r}{0.58\textwidth}
\vspace{-1.2em}
\centering
\caption{
Effect of the temporal coupling schedule.
}
\vspace{-0.8em}
\label{tab:beta_ablation}
\scriptsize
\setlength{\tabcolsep}{2.5pt}
\renewcommand{\arraystretch}{1.03}
\resizebox{\linewidth}{!}{
\begin{tabular}{lcccccc}
\toprule
\multirow{2}{*}{$10^3\beta_t$} 
& \multicolumn{3}{c}{SR}
& \multicolumn{3}{c}{Deblur} \\
\cmidrule(lr){2-4}
\cmidrule(lr){5-7}
& PSNR $\uparrow$ 
& SSIM $\uparrow$
& LPIPS $\downarrow$
& PSNR $\uparrow$ 
& SSIM $\uparrow$
& LPIPS $\downarrow$ \\
\midrule
$0$       
& 27.67 & 0.891 & 0.0889 
& 27.26 & 0.836 & 0.1757 \\
$[1,0.1]$ 
& 27.85 & 0.892 & \underline{0.0803} 
& 27.15 & 0.832 & 0.1828 \\
$[2,0.2]$ 
& 27.68 & 0.890 & 0.0826 
& 27.30 & 0.837 & 0.1810 \\
$[3,0.3]$ 
& 27.92 & 0.896 & 0.0852 
& 27.19 & 0.833 & 0.1755 \\
$[4,0.4]$ 
& 28.18 & \textbf{0.898} & \textbf{0.0768} 
& \underline{27.31} & 0.838 & \textbf{0.1653} \\
$[5,0.5]$ 
& \textbf{28.23} & \textbf{0.898} & 0.0834 
& \textbf{27.49} & \textbf{0.846} & \underline{0.1686} \\
$[6,0.6]$ 
& 27.69 & 0.891 & 0.0898 
& 27.05 & 0.828 & 0.1843 \\
$[7,0.7]$ 
& 27.99 & \underline{0.897} & 0.0832 
& 27.28 & \underline{0.839} & 0.1765 \\
\bottomrule
\end{tabular}
}
\vspace{-1.0em}
\end{wraptable}

We study how the temporal coupling strength affects the final reconstruction.
The parameter $\beta_t$ controls how strongly adjacent reconstruction states are
coupled, and therefore determines the trade-off between trajectory stability and
local refinement. We use an annealed schedule and vary its overall magnitude.

We test 3 images from ImageNet Large Scale Visual Recognition Challenge (ILSVRC). Table~\ref{tab:beta_ablation} shows that moderate temporal coupling gives the
best reconstruction quality. The schedule $[5,0.5]\times10^{-3}$ achieves the
best PSNR for both super-resolution and deblurring, improving over the uncoupled
case ($\beta_t=0$) by 0.56 dB and 0.23 dB, respectively. Very weak coupling provides
insufficient trajectory control, while overly strong coupling may restrict final
refinement. Additional ablations are provided in Appendix~\ref{app:abs} and
Appendix~\ref{app:TK}.

\subsection{Transferability of Trajectory Regularization}

\begin{wraptable}{r}{0.40\textwidth}
% \vspace{-1.2em}
\centering
\caption{
Effect of adding temporal coupling to SITCOM.
}
\vspace{-0.8em}
\label{tab:coupling_diffusion}
\scriptsize
\setlength{\tabcolsep}{4pt}
\renewcommand{\arraystretch}{1.08}
\resizebox{\linewidth}{!}{
\begin{tabular}{lccc}
\toprule
Method 
& PSNR $\uparrow$ 
& SSIM $\uparrow$
& LPIPS $\downarrow$ \\
\midrule
SITCOM 
& 31.27 
& \textbf{0.885} 
& 0.0977 \\
SITCOM + TC
& \textbf{31.41} 
& 0.884 
& \textbf{0.0952} \\
\bottomrule
\end{tabular}
}
\vspace{-1.0em}
\end{wraptable}

We further test whether the proposed temporal coupling is specific to TRACE or
can also be used in other diffusion-style inverse solvers. To this end, we add
the same state-transition penalty to SITCOM, a representative pretrained
diffusion-based inverse solver, while keeping the rest of the reconstruction
pipeline unchanged.

We test 3 images from CBSD68 for the non-linear blur task. 
Table~\ref{tab:coupling_diffusion} shows that adding temporal coupling slightly
improves PSNR from 31.27 dB to 31.41 dB and reduces LPIPS from 0.0977 to
0.0952, while maintaining comparable SSIM. Although the gain is modest, this
suggests that trajectory regularization is not tied to the specific TRACE
implementation and can be incorporated into existing diffusion-style inverse
solvers. Appendix~\ref{app:pretrain} further studies TRACE with pretrained
initialization, where pretraining is used only to initialize the network
parameters.

\section{Conclusion}

We introduced TRACE, a trajectory-based framework for inverse problems that
models reconstruction as a sequence of intermediate states. Instead of optimizing
only a single final estimate, TRACE explicitly regularizes the transitions
between consecutive states, leading to a more controlled reconstruction process.
Our analysis shows that temporal coupling helps bound the evolution of the
trajectory, and our experiments demonstrate that this control translates into
improved reconstruction quality and more stable behavior across natural-image
restoration and CT reconstruction tasks.
The empirical results also highlight the importance of the individual design
components. Temporal coupling stabilizes adjacent states, stochastic
perturbation improves exploration, and weight inheritance preserves optimization
continuity along the trajectory. In addition, the transferability study suggests
that the same trajectory regularization principle can be incorporated into other
diffusion-style inverse solvers. Overall, these findings suggest that modeling
the reconstruction path, rather than only the endpoint, provides a useful way to
understand and design stable iterative methods for inverse problems.

\section{Acknowledgments}
This work was supported in part by the
National Science Foundation (NSF) grants CCF-2212065, ECCS-2436945, and NSF CAREER CCF-2442240. 

\bibliography{references}
%%%%%%%%%%%%%%%%%%%%%%%%%%%%%%%%%%%%%%%%%%%%%%%%%%%%%%%%%%%%
\newpage
\appendix
\onecolumn
\par\noindent\rule{\textwidth}{1pt}
\begin{center}
{\Large \bf Appendix}
\end{center}
\vspace{-0.1in}
\par\noindent\rule{\textwidth}{1pt}
\appendix
\section{Detailed Proofs}
\label{app:proofs}
We provide proofs of the theoretical results stated in the main paper.
\subsection{Proof of Theorem \ref{thm:approx}}\label{app:pf1}
\begin{proof}
By the optimality condition of the ideal local subproblem in \eqref{eq:prox_subproblem}, we have
\begin{equation}
0
\in
\partial F_t(\mathbf{x}_t^\star)
+
\beta_t(\mathbf{x}_t^\star-\mathbf{x}_{t+1}).
\end{equation}
Therefore, there exists 
$\mathbf{g}_t^\star\in \partial F_t(\mathbf{x}_t^\star)$
such that
\begin{equation}
\mathbf{g}_t^\star
+
\beta_t(\mathbf{x}_t^\star-\mathbf{x}_{t+1})
=
0.
\end{equation}
Rearranging gives
\begin{equation}
\mathbf{x}_t^\star-\mathbf{x}_{t+1}
=
-\frac{1}{\beta_t}\mathbf{g}_t^\star .
\end{equation}
Taking norms and using Assumption~\ref{assump:wellposed}, we obtain
\begin{equation}
\|\mathbf{x}_t^\star-\mathbf{x}_{t+1}\|_2
\le
\frac{M_t}{\beta_t},
\label{eq:ideal_transition_bound}
\end{equation}
where $M_t$ is the bound of the subgradient.

Since the implemented update is $\mathbf{x}_t=D_{\theta_t^K}(\mathbf{u}_t)$, the triangle inequality yields
\begin{equation}\label{eq:tri}
\begin{aligned}
\|\mathbf{x}_t-\mathbf{x}_{t+1}\|_2
&=
\|D_{\theta_t^K}(\mathbf{u}_t)-\mathbf{x}_{t+1}\|_2 \\
&\le
\|D_{\theta_t^K}(\mathbf{u}_t)-\mathbf{x}_t^\star\|_2
+
\|\mathbf{x}_t^\star-\mathbf{x}_{t+1}\|_2 .
\end{aligned}
\end{equation}
Taking expectations on both sides and applying Assumption~\ref{assump:approx} together with \eqref{eq:ideal_transition_bound}, we obtain
\begin{equation}
\mathbb{E}
\|\mathbf{x}_t-\mathbf{x}_{t+1}\|_2
\le
\delta_t
+
\frac{M_t}{\beta_t}.
\end{equation}
This proves the claim.
\end{proof}
\subsection{Proof of Theorem \ref{thm:global_stability}}\label{app:pf2}
\begin{proof}
For each $t=T-1,\ldots,0$, we have
\begin{equation}\label{eq:firstineq}
\begin{aligned}
\|\mathbf{x}_t-\tilde{\mathbf{x}}_t\|_2
&=
\|D_{\theta_t^K}(\mathbf{u}_t)-P_t(\tilde{\mathbf{x}}_{t+1})\|_2 \\
&\le
\|D_{\theta_t^K}(\mathbf{u}_t)-P_t(\mathbf{x}_{t+1})\|_2
+
\|P_t(\mathbf{x}_{t+1})-P_t(\tilde{\mathbf{x}}_{t+1})\|_2 .
\end{aligned}
\end{equation}
Since $F_t$ is $\rho_t$-weakly convex, we have 
\begin{equation}\label{eq:weakly}
\|P_t(\mathbf{x}_{t+1})-P_t(\tilde{\mathbf{x}}_{t+1})\|_2\le\frac{\beta_t}{\beta_t-\rho_t}\|\mathbf{x}_{t+1}-\tilde{\mathbf{x}}_{t+1}\|_2.
\end{equation}
On the other hand, by the definition of the exact local subproblem minimizer $\mathbf{x}_{t}^\star=P_t(\mathbf{x}_{t+1})$, we know that
\begin{equation}\label{eq:DPdis}
\begin{aligned}
\|D_{\theta_t^K}(\mathbf{u}_t)-P_t(\mathbf{x}_{t+1})\|_2&= \|D_{\theta_t^K}(\mathbf{u}_t)-\mathbf{x}_{t}^\star\|_2.
\end{aligned}
\end{equation}
Combining \eqref{eq:firstineq}, \eqref{eq:weakly}, and \eqref{eq:DPdis}, and taking expectations,  and applying Assumption~\ref{assump:approx}, we obtain
\begin{equation}\label{eq:recursive_stability} 
\mathbb{E}\|\mathbf{x}_t-\tilde{\mathbf{x}}_t\|_2
\le
\delta_t+
\frac{\beta_t}{\beta_t-\rho_t}
\mathbb{E}\|\mathbf{x}_{t+1}-\tilde{\mathbf{x}}_{t+1}\|_2 .
\end{equation}
Since $\tilde{\mathbf{x}}_T=\mathbf{x}_T$, $\mathbb{E}\|\mathbf{x}_{T}-\tilde{\mathbf{x}}_{T}\|_2=0$. Applying \eqref{eq:recursive_stability} repeatedly from $T-1$ to $t$ gives 
\begin{equation}
\mathbb{E}\|\mathbf{x}_t-\tilde{\mathbf{x}}_t\|_2
\le
\sum_{s=t}^{T-1}
\delta_s
\prod_{i=t}^{s-1}
\frac{\beta_i}{\beta_i-\rho_i},
\end{equation}
where the empty product is defined as $1$. 
In particular,
\begin{equation}
\mathbb{E}\|\mathbf{x}_0-\tilde{\mathbf{x}}_0\|_2
\le
\sum_{s=0}^{T-1}
\delta_s
\prod_{i=0}^{s-1}
\frac{\beta_i}{\beta_i-\rho_i}.
\end{equation}

\end{proof}

\section{Algorithmic Details}
\label{app:algo}

In this appendix, we summarize the implementation details of all methods and the optimization settings used throughout our benchmark.

\paragraph{Unified evaluation protocol.}
All methods are evaluated under the same protocol described in Sec.~4.1, using identical forward operators, measurements, and degradation models for each task. The benchmark includes random inpainting with \(50\%\) and \(70\%\) missing pixels, bicubic super-resolution with scaling factors of \( \times 2 \) and \( \times 4 \), motion deblurring, nonlinear deblurring, and CT reconstruction under sparse-view and limited-angle settings. Unless otherwise specified, natural images are resized to \(256 \times 256\), and all optimization procedures are performed using Adam. We report PSNR and SSIM for all tasks, and additionally report LPIPS for RGB natural-image tasks when ground-truth images are available. For CT reconstruction, all methods are evaluated on normalized slices under matched measurement settings.

\paragraph{Hardware and software.}
All experiments are implemented in PyTorch and conducted on servers equipped with NVIDIA RTX PRO 6000 Blackwell Server Edition GPUs. Whenever applicable, random seeds are fixed to improve reproducibility.

\subsection{TRACE}
TRACE alternates between trajectory-state updates and inner neural optimization. Unless otherwise specified, we use \(T=40\) outer trajectory states (\texttt{--steps 40}) and \(K=150\) inner optimization steps per state (\texttt{--iter 150}), with a learning rate of \(1 \times 10^{-3}\) (\texttt{--learning\_rate 0.001}). The default backbone is \texttt{deepunet}, while \texttt{dip\_skip} is also supported as an alternative. We adopt a DDPM-style diffusion schedule, with \texttt{beta\_start} set to \(10^{-4}\) and \texttt{beta\_end} set to \(10^{-2}\). The temporal coupling coefficients \(\{\beta_t\}\) are linearly annealed from \(5 \times 10^{-3}\) to \(5 \times 10^{-4}\), unless otherwise stated, and the stochastic perturbation schedule \(\{\sigma_t\}\) is controlled by \texttt{noise\_eta\_scale}. All ablation studies are performed within the same implementation by enabling or disabling components through switches such as \texttt{disable\_temporal\_coupling}, \texttt{disable\_stochastic\_perturbation}, and the option that removes weight inheritance across trajectory states.

\subsection{Compared Methods}
We compare TRACE against representative untrained, iterative, and diffusion-based inverse solvers, including DIP, ASeqDIP, Self-Guided DIP, SDI, UGoDIT, and uDiG-DIP when applicable. In addition, we include SITCOM as a representative pretrained diffusion-based inverse solver in order to examine whether the proposed temporal coupling mechanism can be transferred beyond the TRACE framework. With the exception of the parameters specifically noted below, all other settings remain consistent with their original paper.

\paragraph{DIP~\cite{ulyanov2018dip}.}
For DIP, we use \(3000\) optimization iterations with a learning rate of \(1 \times 10^{-2}\). The reconstruction network is the standard skip-network DIP architecture, and the input regularization noise is set to \texttt{reg\_noise\_std}\(=1/30\).

\paragraph{ASeqDIP~\cite{alkhouri2024image}.}
For natural-image tasks, ASeqDIP is evaluated with \(T=300\), \(K=2\), a learning rate of \(3 \times 10^{-4}\), and a regularization weight of \(1.0\). For CT reconstruction tasks, we follow a longer optimization schedule with \(T=1200\), \(K=5\), a learning rate of \(2 \times 10^{-4}\), and a regularization weight of \(0.0\) (AseqDIP without regularization). The backbone architecture follows the task-dependent UNet variants used in the original method.

\paragraph{Self-Guided DIP~\cite{liang2025analysis}.}
Self-Guided DIP is run for \(4000\) iterations. The learning rate for the network parameters is set to \(3 \times 10^{-4}\), while the reference variable is optimized with a learning rate of \(1 \times 10^{-1}\). We use a guidance weight of \(\alpha = 2.0\), \(3\) smoothing samples. %, and an EMA weight of \(0.99\). 
The backbone is an extra-deep UNet.

\paragraph{SDI~\cite{luo2025selfdiffusion}.}
For SDI, we use \(T=40\) outer diffusion steps and \(K=150\) inner optimization steps, with a learning rate of \(1 \times 10^{-3}\). The backbone is a deep UNet with \texttt{filter\_number}\(=16\), and the diffusion process follows a DDPM-style schedule.

\paragraph{UGoDIT~\cite{liangugodit}.}
For natural-image tasks, UGoDIT is evaluated with \(T=300\), \(K=2\), and a learning rate of \(1 \times 10^{-2}\). For CT tasks, we use \(T=1200\), \(K=5\), and a learning rate of \(2 \times 10^{-4}\). When required by the method configuration, the pretrained encoder is frozen and only the decoder is optimized. The pretrained encoder is the original implementation from their paper. %The diffusion purifier is enabled whenever it is part of the prescribed setting.

\paragraph{uDiG-DIP~\cite{liang2025sequential}.}
When applicable to natural-image tasks, uDiG-DIP is evaluated with \(T=300\), \(K=2\), and a learning rate of \(3 \times 10^{-4}\). For CT reconstruction, we use \(T=1200\), \(K=5\), and a learning rate of \(2 \times 10^{-4}\). %Depending on the task setting, the method includes Gaussian purification and, when required, diffusion purification.

\paragraph{SITCOM~\cite{alkhouri2025sitcom}.}
We use SITCOM as a representative pretrained diffusion-based inverse solver to assess whether temporal coupling remains effective beyond the TRACE implementation. In our pipeline, SITCOM follows its DDIM-style optimization setting, with \texttt{n\_step}\(=20\), \texttt{num\_steps}\(=30\), and a learning rate of \(0.02\). The pre-trained models are downloaded from their repo. Temporal coupling is incorporated as an additional controllable state-transition penalty within the same reconstruction loop.

\paragraph{Fairness of optimization budgets.}
Across all methods, we use identical measurements, forward operators, and degradation settings for each task. Because the compared approaches differ substantially in their optimization structures, it is not always possible to match computational budgets exactly. Nevertheless, whenever the formulations are sufficiently comparable, we use the same or closely matched iteration budgets. For trajectory-based methods in particular, we explicitly report both the number of outer states \(T\) and the number of inner optimization steps \(K\) to make the optimization cost transparent.

\section{Additional Experimental Results}
\label{app:moreresults}
\newcommand{\qualcell}[3]{
\begin{minipage}[t]{0.235\linewidth}
    \centering
    \includegraphics[width=\linewidth]{#1}\\[-0.15em]
    {\scriptsize #2}\\[-0.15em]
    {\scriptsize #3}
\end{minipage}
}
In this section, we provide additional qualitative results to complement the
main experiments. These examples further illustrate the behavior of TRACE across
different natural-image inverse problems, including image inpainting,
super-resolution, and deblurring. For all visual comparisons, the numbers below
each image denote PSNR$\uparrow$ / SSIM$\uparrow$ / LPIPS$\downarrow$.
\begin{figure*}[t]
\centering
\setlength{\tabcolsep}{2pt}
\renewcommand{\arraystretch}{1.0}

\textbf{Additional Result: Inpainting with $50\%$ Missing Pixels}
\par\vspace{0.45em}

\begin{tabular}{@{}cccc@{}}
\qualcell{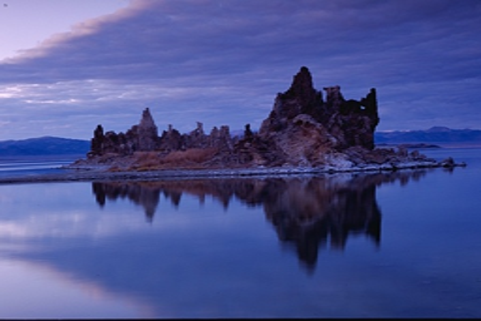}
{(a) GT}
{}
&
\qualcell{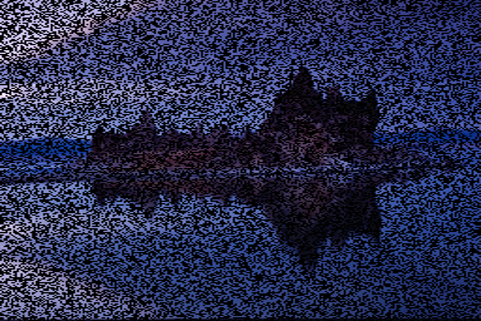}
{(b) Observation}
{11.00 / 0.118 / 0.9134}
&
\qualcell{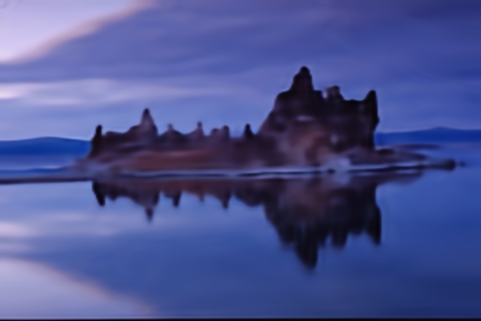}
{(c) DIP}
{31.49 / 0.871 / 0.1762}
&
\qualcell{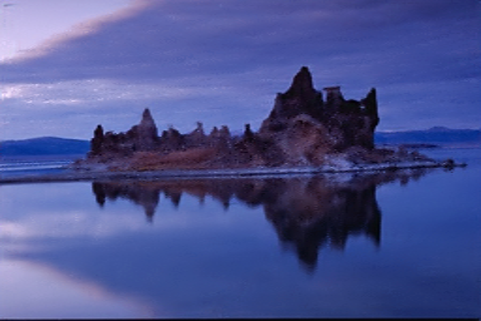}
{(d) ASeqDIP}
{34.92 / 0.940 / 0.0347}
\\[1.0em]

\qualcell{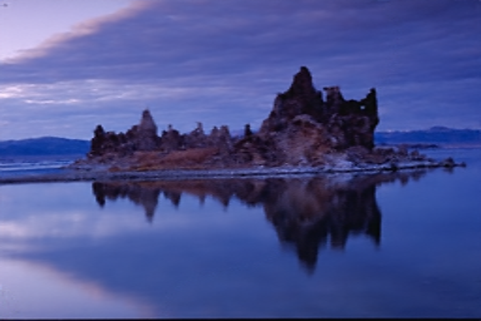}
{(e) Self-Guided}
{36.31 / \textbf{0.967} / \textbf{0.0161}}
&
\qualcell{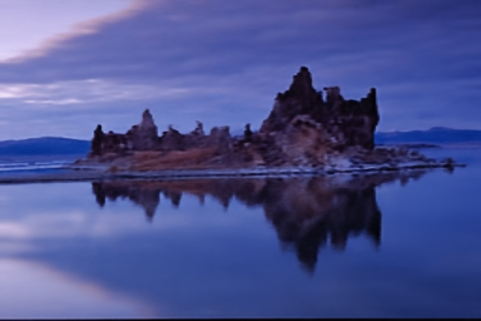}
{(f) UGoDIT}
{34.15 / 0.927 / 0.0586}
&
\qualcell{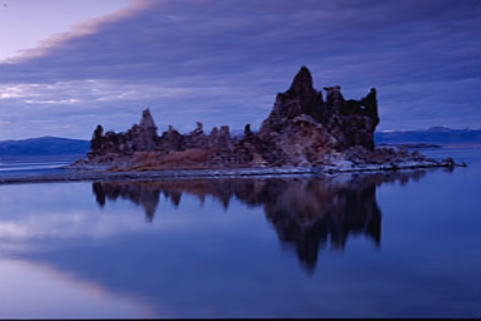}
{(g) SDI}
{36.02 / 0.965 / 0.0176}
&
\qualcell{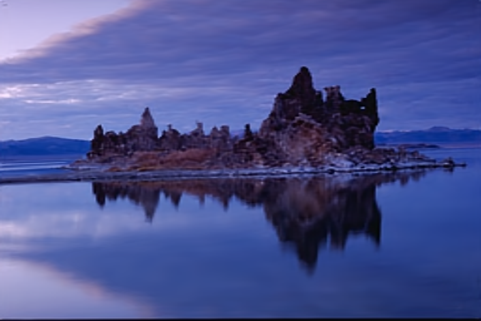}
{(h) \textbf{TRACE}}
{\textbf{36.63} / 0.965 / 0.0248}
\end{tabular}

\vspace{0.35em}
\caption{
Additional qualitative result for image inpainting with $50\%$ missing pixels.
The numbers below each image denote PSNR$\uparrow$ / SSIM$\uparrow$ /
LPIPS$\downarrow$. TRACE achieves the highest PSNR on this example and produces
a visually consistent reconstruction in the missing regions.
}
\label{fig:qual_inpainting}
\end{figure*}

\begin{figure}[t]
\centering
\setlength{\tabcolsep}{2pt}
\renewcommand{\arraystretch}{1.0}

\begin{tabular}{cccc}
\imgcell{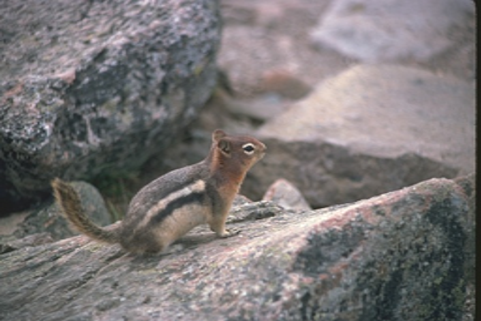}
{(a) GT}
{ }
&
\imgcell{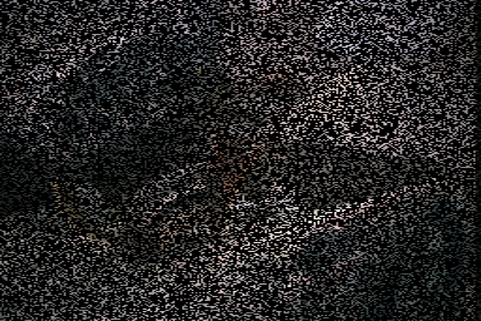}
{(b) Observation}
{8.39 / 0.057 / 0.9367}
&
\imgcell{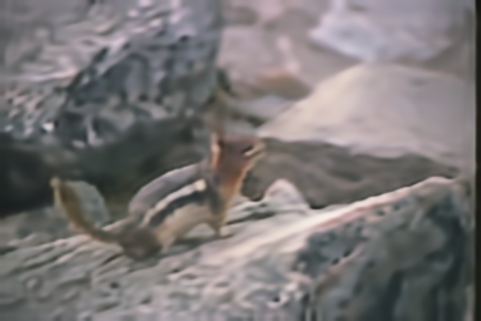}
{(c) DIP}
{26.97 / 0.737 / 0.4258}
&
\imgcell{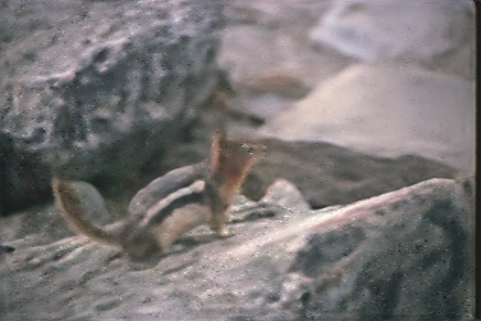}
{(d) ASeqDIP}
{27.75 / 0.794 / 0.2269}
\\[1.0em]

\imgcell{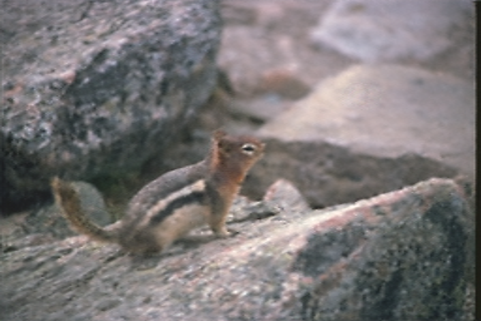}
{(e) Self-Guided}
{29.06 / 0.881 / \textbf{0.0767}}
&
\imgcell{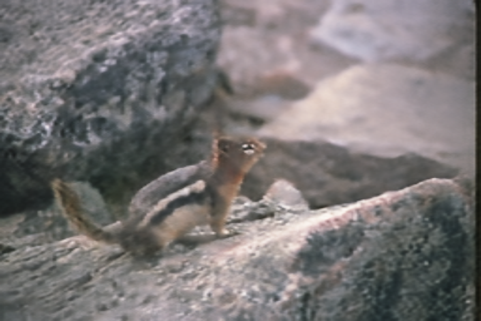}
{(f) UGoDIT}
{28.64 / 0.842 / 0.1393}
&
\imgcell{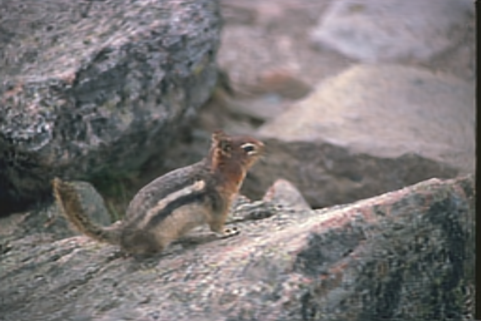}
{(g) SDI}
{28.75 / 0.878 / 0.0818}
&
\imgcell{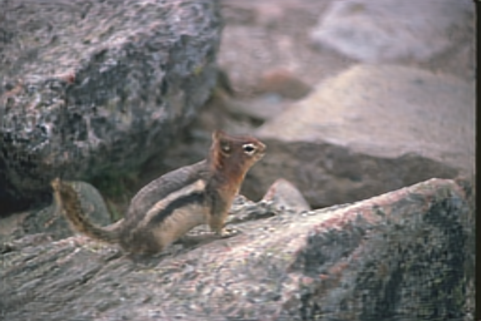}
{(h) \textbf{TRACE}}
{\textbf{29.88} / \textbf{0.888} / 0.1026}
\end{tabular}

\vspace{0.35em}
\caption{
Representative image inpainting result under $70\%$ missing pixels.
The numbers below each image denote PSNR / SSIM / LPIPS.
TRACE achieves the highest PSNR and SSIM on this challenging example,
indicating improved pixel-level and structural fidelity.
}
\label{fig:inpaint70_example2}
\end{figure}

\subsection{Image Inpainting}
\label{app:inpainting}

We provide an additional qualitative comparison for image inpainting with
$50\%$ random masking. Inpainting removes a large portion of pixels and therefore
requires the reconstruction method to infer missing structures from incomplete
observations. This provides a complementary test to deblurring and
super-resolution, where the measurements are corrupted but spatially complete.

Figures~\ref{fig:qual_inpainting} and \ref{fig:inpaint70_example2} show that TRACE produces a visually consistent
reconstruction in the missing regions. Compared with DIP and ASeqDIP, TRACE
reduces artifacts caused by directly fitting the incomplete observation. Compared
with stronger baselines such as Self-Guided DIP and SDI, TRACE achieves the
highest PSNR on these examples while maintaining competitive structural and
perceptual quality.

\subsection{Image Super-Resolution}
\label{app:sr}

We further evaluate image super-resolution under $\times2$ and $\times4$ downsampling
setting. Super-resolution requires recovering high-frequency details from a
low-resolution observation, making it sensitive to over-smoothing and unstable
texture generation.

Figures ~\ref{fig:qual_sr} and \ref{fig:sr4_example} show additional SR examples. TRACE achieves the best
PSNR and LPIPS among the compared methods and matches the best SSIM. This
indicates that the proposed trajectory-constrained formulation improves
pixel-level fidelity while preserving perceptual quality. The result further
supports the conclusion in the main paper that TRACE is effective for recovering
fine image structures under resolution loss.

\begin{figure*}[t]
\centering
\setlength{\tabcolsep}{2pt}
\renewcommand{\arraystretch}{1.0}

\textbf{Additional Result: Super-resolution $\times2$}
\par\vspace{0.45em}

\begin{tabular}{@{}cccc@{}}
\qualcell{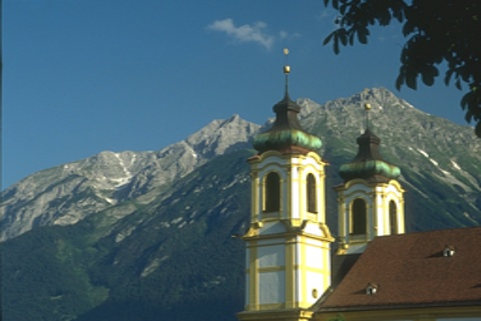}
{(a) GT}
{}
&
\qualcell{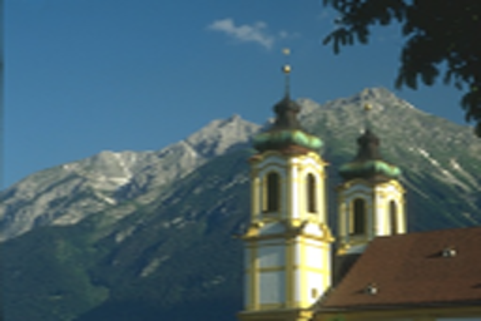}
{(b) Observation}
{30.01 / 0.903 / 0.1573}
&
\qualcell{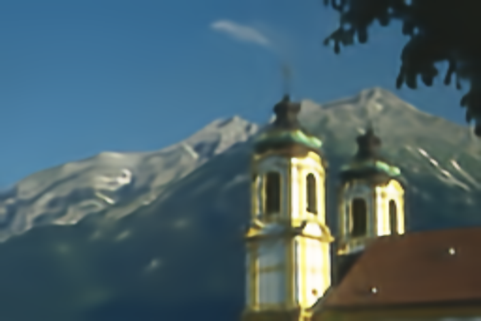}
{(c) DIP}
{28.09 / 0.808 / 0.2740}
&
\qualcell{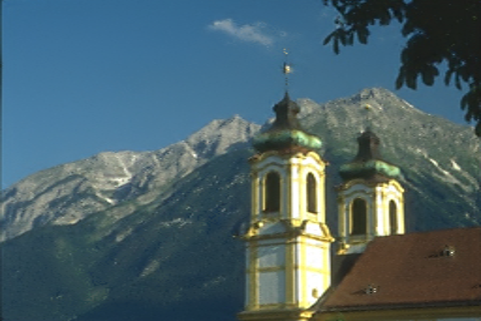}
{(d) ASeqDIP}
{29.32 / 0.843 / 0.2079}
\\[1.0em]

\qualcell{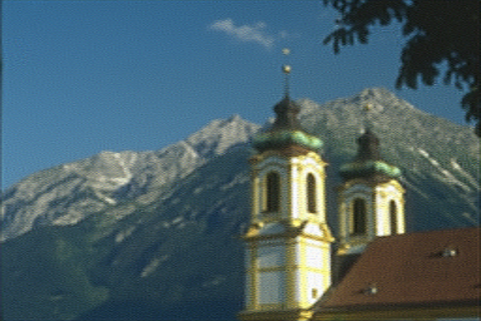}
{(e) Self-Guided}
{28.29 / 0.752 / 0.1111}
&
\qualcell{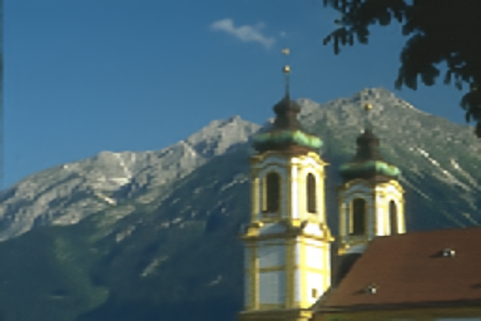}
{(f) UGoDIT}
{30.09 / 0.883 / 0.0836}
&
\qualcell{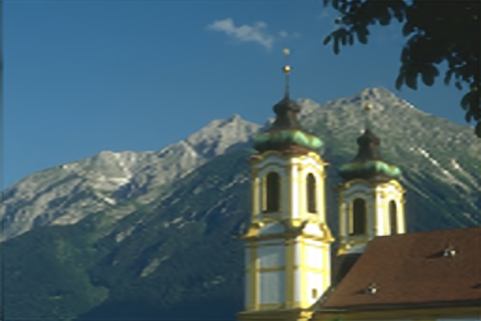}
{(g) SDI}
{30.74 / \textbf{0.915} / 0.0853}
&
\qualcell{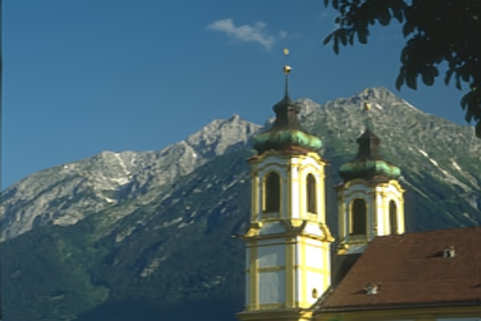}
{(h) \textbf{TRACE}}
{\textbf{30.82} / \textbf{0.915} / \textbf{0.0766}}
\end{tabular}

\vspace{0.35em}
\caption{
Additional qualitative result for image super-resolution under $\times2$
downsampling. The numbers below each image denote PSNR$\uparrow$ /
SSIM$\uparrow$ / LPIPS$\downarrow$. TRACE achieves the best PSNR and LPIPS and
matches the best SSIM on this example.
}
\label{fig:qual_sr}
\end{figure*}
\begin{figure}[t]
\centering
\setlength{\tabcolsep}{2pt}
\renewcommand{\arraystretch}{1.0}

\begin{tabular}{cccc}
\imgcell{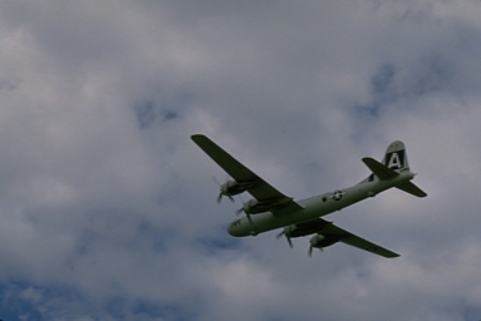}
{(a) GT}
{ }
&
\imgcell{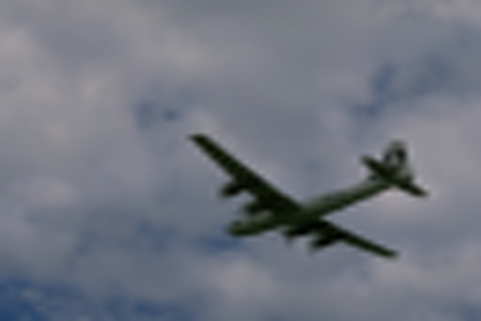}
{(b) Observation}
{33.32 / 0.949 / 0.1936}
&
\imgcell{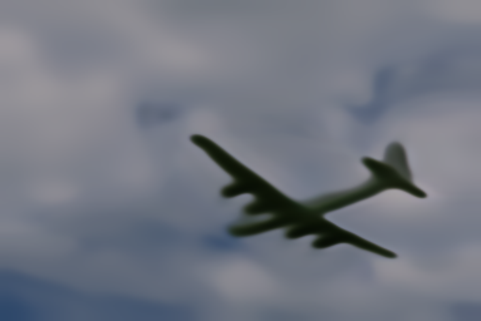}
{(c) DIP}
{32.82 / 0.940 / 0.2224}
&
\imgcell{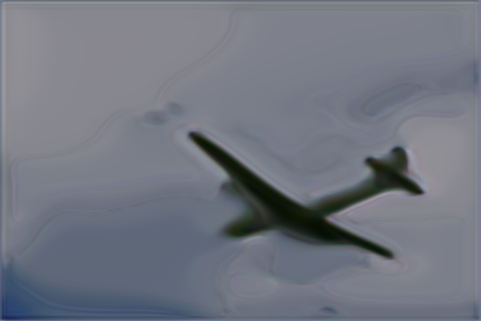}
{(d) ASeqDIP}
{28.26 / 0.869 / 0.3694}
\\[1.0em]

\imgcell{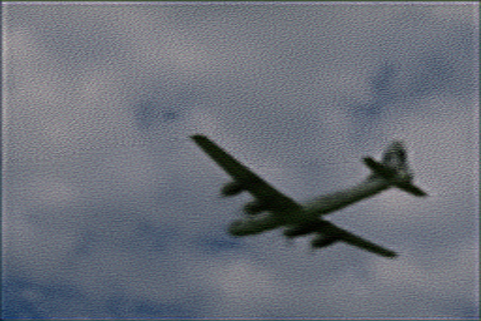}
{(e) Self-Guided}
{25.08 / 0.454 / 0.7417}
&
\imgcell{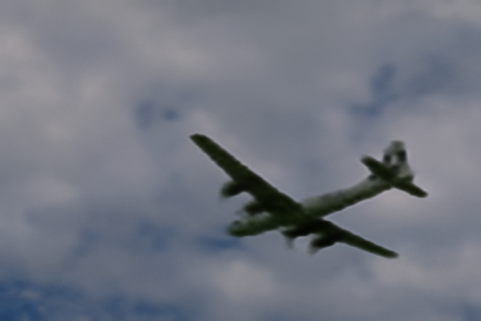}
{(f) UGoDIT}
{35.22 / 0.958 / \textbf{0.1041}}
&
\imgcell{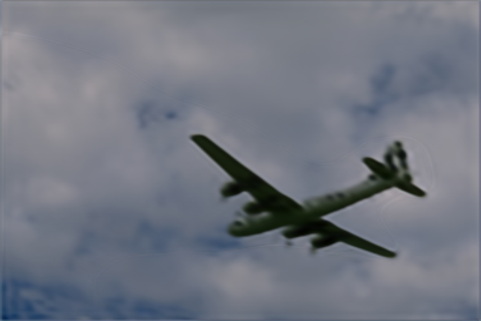}
{(g) SDI}
{34.98 / 0.960 / 0.1384}
&
\imgcell{TRACE-results/BSDS68/train/0000/SR4x-run1/scdd-SR4x-0000-gt_origsize.png}
{(h) \textbf{TRACE}}
{\textbf{35.75} / \textbf{0.965} / 0.1275}
\end{tabular}

\vspace{0.35em}
\caption{
Representative super-resolution result under $\times4$ downsampling.
The numbers below each image denote PSNR$\uparrow$ / SSIM$\uparrow$ / LPIPS$\downarrow$.
TRACE achieves the highest PSNR and SSIM on this example, indicating improved
pixel-level accuracy and structural fidelity under severe resolution loss.
}
\label{fig:sr4_example}
\end{figure}

\subsection{Image Deblurring}
\label{app:deblurring}

We provide an additional qualitative comparison for motion deblurring. Deblurring
is highly ill-conditioned because high-frequency information is attenuated by
the blur operator, and iterative reconstruction methods can easily introduce
ringing or texture artifacts.

For both deblurring settings, we synthesize the observation as
\(
y = A(x_{\mathrm{true}})
\)
without adding extra noise, where \(x_{\mathrm{true}}\) is the resized RGB image
(default \(256\times256\)). The blur operator is implemented as channel-wise
(depthwise) convolution with shared kernels across three channels and symmetric
padding (\(p=10\), kernel size \(21\times21\)), outputs are clipped to \([0,1]\).

\textbf{Motion blur.}
We use a spatially invariant linear motion kernel of length 21 and direction
\(45^\circ\). The kernel is constructed by drawing a line through the kernel center
along the motion direction and normalizing it to unit sum.

\textbf{Nonlinear blur.}
We use an anisotropic Gaussian blur kernel (size \(21\times21\)) with
\(\sigma_x=3.0\), \(\sigma_y=8.0\), and rotation angle \(30^\circ\), followed by
unit-sum normalization. This setting introduces direction-dependent smoothing and
is more challenging for recovering fine structures than isotropic blur.

Figure~\ref{fig:qual_motion} shows that TRACE achieves the best PSNR, SSIM, and
LPIPS on the displayed example. Compared with DIP and ASeqDIP, TRACE recovers
sharper structures and avoids severe blur-related artifacts. Compared with
Self-Guided DIP, UGoDIT, and SDI, TRACE further improves both distortion-based
metrics and perceptual quality. These additional results are consistent with the
main-paper observation that trajectory regularization improves stability and
reconstruction reliability in ill-conditioned inverse problems.
\begin{figure*}[t]
\centering
\setlength{\tabcolsep}{2pt}
\renewcommand{\arraystretch}{1.0}

\textbf{Additional Result: Motion Deblurring}
\par\vspace{0.45em}

\begin{tabular}{@{}cccc@{}}
\qualcell{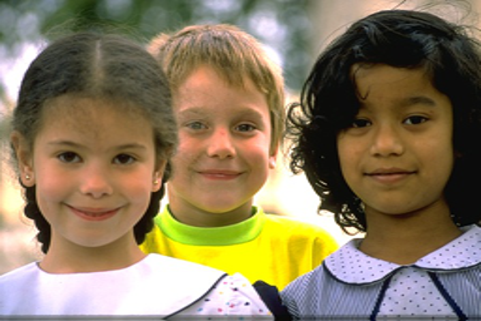}
{(a) GT}
{}
&
\qualcell{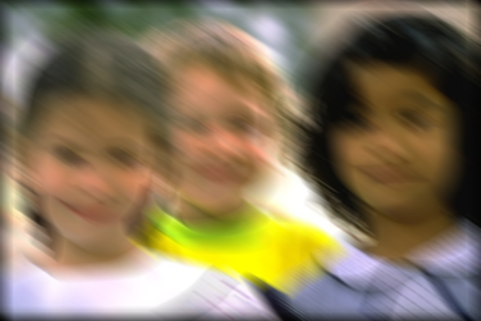}
{(b) Observation}
{18.85 / 0.581 / 0.4676}
&
\qualcell{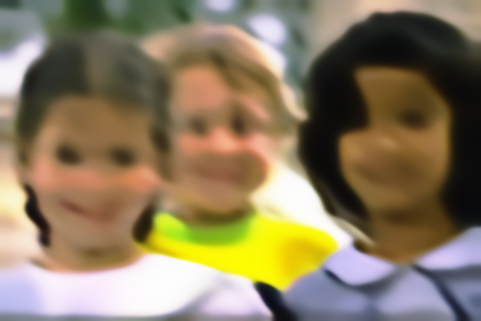}
{(c) DIP}
{22.11 / 0.643 / 0.4484}
&
\qualcell{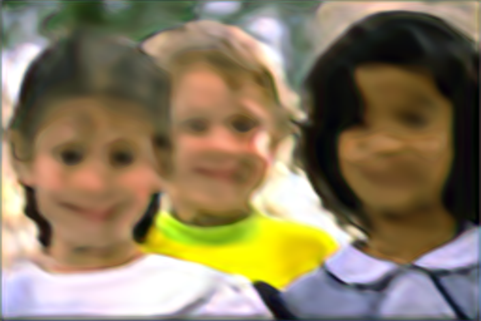}
{(d) ASeqDIP}
{23.39 / 0.687 / 0.3456}
\\[1.0em]

\qualcell{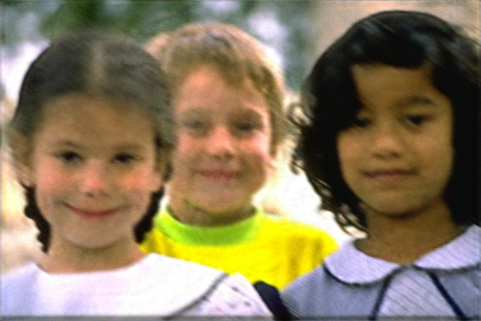}
{(e) Self-Guided}
{27.24 / 0.829 / 0.1555}
&
\qualcell{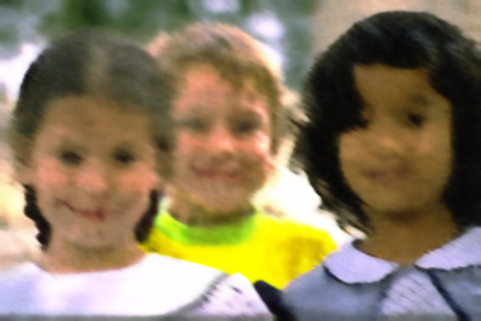}
{(f) UGoDIT}
{24.23 / 0.727 / 0.2896}
&
\qualcell{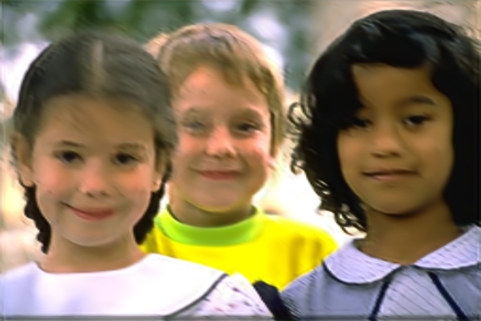}
{(g) SDI}
{28.65 / 0.872 / 0.1263}
&
\qualcell{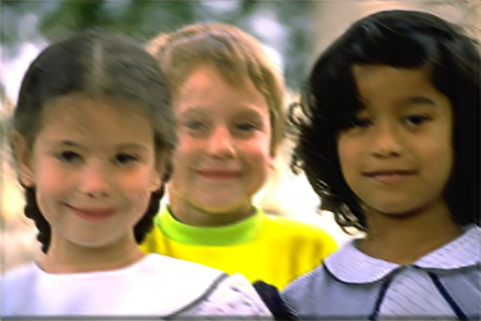}
{(h) \textbf{TRACE}}
{\textbf{29.30} / \textbf{0.878} / \textbf{0.1062}}
\end{tabular}

\vspace{0.35em}
\caption{
Additional qualitative result for motion deblurring.
The numbers below each image denote PSNR$\uparrow$ / SSIM$\uparrow$ /
LPIPS$\downarrow$. TRACE achieves the best PSNR, SSIM, and LPIPS on this
example, indicating improved structural recovery and perceptual quality.
}
\label{fig:qual_motion}
\end{figure*}

\subsection{Sparse-view and Limited-angle CT Reconstruction}
\label{app:ct}

We further evaluate CT reconstruction under two challenging tomographic
settings: sparse-view and limited-angle imaging. Both settings are tested on
anonymized abdominal CT scans from the AAPM dataset, using two representative
cases, L067 and L109. We test 3 scans from L067 and 2 scans from L109. 

For sparse-view CT, projections are acquired over the full $0^\circ$--$180^\circ$
angular range using only 60 views. This setting preserves angular coverage but
severely reduces the number of measurements. For limited-angle CT, projections
are acquired from $0^\circ$ to $119^\circ$ with a $1^\circ$ increment, resulting
in 120 contiguous views within a restricted angular range. This setting leads to
missing-angle artifacts and stronger directional ambiguity.

% All methods are evaluated under the same forward model, projection geometry, and measurement settings in each regime. Table~\ref{tab:app_ct} reports the quantitative results in terms of PSNR and SSIM. 
Figures~\ref{fig:ct_sparseview_qual} and~\ref{fig:ct_limitedangle_qual} provide additional qualitative comparisons for sparse-view and limited-angle CT, respectively.

Across both regimes, TRACE consistently achieves stronger reconstruction
quality than the compared methods. In sparse-view CT, TRACE better suppresses
streak artifacts caused by insufficient projection views. In limited-angle CT,
TRACE reduces missing-angle artifacts and preserves more coherent anatomical
structures. These results further support the effectiveness of trajectory
regularization for stabilizing highly ill-posed tomographic reconstruction.
\newcommand{\ctappcell}[3]{
\begin{minipage}[t]{0.19\textwidth}
    \centering
    \includegraphics[width=\linewidth]{#1}\\[-0.15em]
    {\scriptsize #2}\\[-0.15em]
    {\scriptsize #3}
\end{minipage}
}
\begin{figure*}[t]
\centering
\setlength{\tabcolsep}{2pt}
\renewcommand{\arraystretch}{1.0}

\textbf{CT Reconstruction: Sparse-view}
\par\vspace{0.45em}

\begin{tabular}{@{}cccc@{}}
\ctappcell{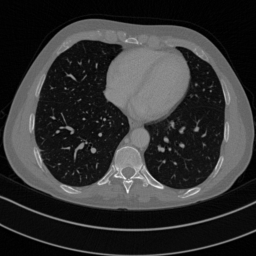}
{(a) GT}
{}
&
\ctappcell{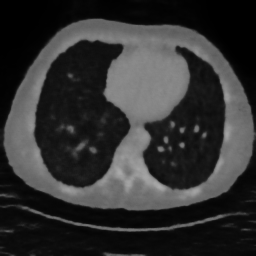}
{(b) DIP}
{25.06 / 0.697}
&
\ctappcell{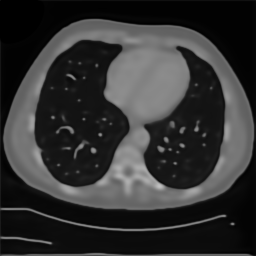}
{(c) ASeqDIP}
{24.96 / 0.803}
&
\ctappcell{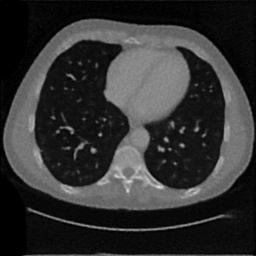}
{(d) Self-Guided}
{26.21 / 0.792}
\\[1.0em]

\multicolumn{4}{c}{
\begin{tabular}{@{}ccc@{}}
\ctappcell{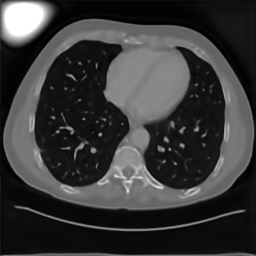}
{(e) SDI}
{16.48 / 0.811}
&
\ctappcell{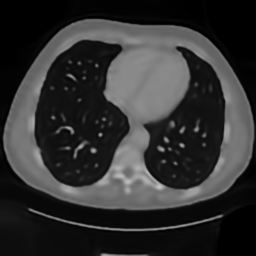}
{(f) uDiG-DIP}
{26.61 / 0.786}
&
\ctappcell{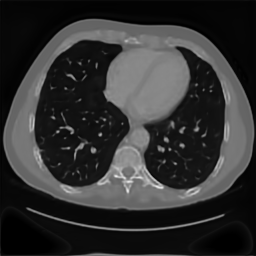}
{(g) \textbf{TRACE}}
{\textbf{27.15} / \textbf{0.859}}
\end{tabular}
}
\end{tabular}

\vspace{0.35em}
\caption{
Additional qualitative results for sparse-view CT reconstruction.
The comparison includes the ground truth and reconstructions from DIP, ASeqDIP,
Self-Guided DIP, SDI, uDiG-DIP, and TRACE.
The numbers below each reconstruction denote PSNR$\uparrow$ / SSIM$\uparrow$.
TRACE achieves the highest PSNR and SSIM on this case, indicating improved
artifact suppression and structural consistency under sparse-view sampling.
}
\label{fig:ct_sparseview_qual}
\end{figure*}
\begin{figure*}[t]
\centering
\setlength{\tabcolsep}{2pt}
\renewcommand{\arraystretch}{1.0}

\textbf{CT Reconstruction: Limited-angle}
\par\vspace{0.45em}

\begin{tabular}{@{}cccc@{}}
\ctappcell{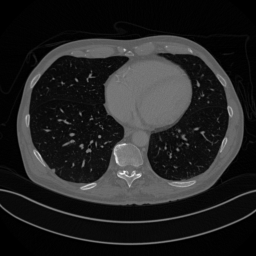}
{(a) GT}
{}
&
\ctappcell{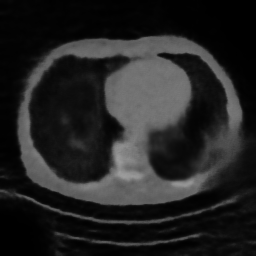}
{(b) DIP}
{24.66 / 0.664}
&
\ctappcell{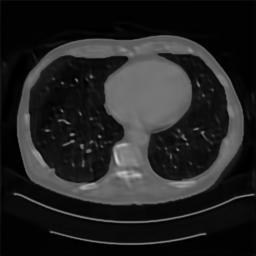}
{(c) ASeqDIP}
{28.66 / 0.852}
&
\ctappcell{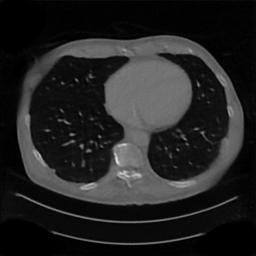}
{(d) Self-Guided}
{28.81 / 0.850}
\\[1.0em]

\multicolumn{4}{c}{
\begin{tabular}{@{}ccc@{}}
\ctappcell{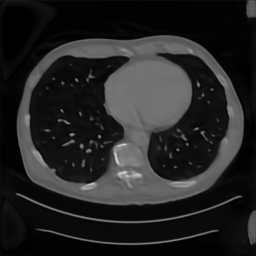}
{(e) SDI}
{27.99 / 0.848}
&
\ctappcell{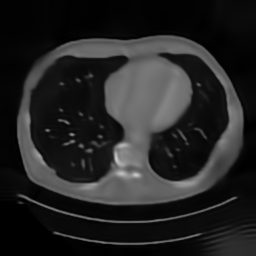}
{(f) uDiG-DIP}
{25.93 / 0.760}
&
\ctappcell{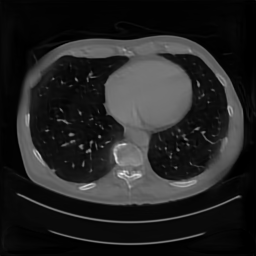}
{(g) \textbf{TRACE}}
{\textbf{29.06} / \textbf{0.862}}
\end{tabular}
}
\end{tabular}

\vspace{0.35em}
\caption{
Additional qualitative result for limited-angle CT reconstruction.
The comparison includes the ground truth and reconstructions from DIP, ASeqDIP,
Self-Guided DIP, SDI, uDiG-DIP, and TRACE.
The numbers below each reconstruction denote PSNR$\uparrow$ / SSIM$\uparrow$.
TRACE achieves the highest PSNR and SSIM on this case, showing improved
robustness to missing angular measurements and reduced limited-angle artifacts.
}
\label{fig:ct_limitedangle_qual}
\end{figure*}

\section{Ablation Study}\label{app:abs}

We study the contribution of the main components in the proposed trajectory
formulation. Specifically, we consider three variants: 
(i) removing temporal coupling, 
(ii) removing stochastic perturbation, and 
(iii) removing weight inheritance across trajectory states. 
% These ablations isolate the roles of state coupling, exploration, and
% optimization continuity, respectively. Additional ablations, including the
% single-stage variant without trajectory modeling, sensitivity to stochastic
% perturbation, iteration settings, and network architectures, are provided in
% Appendix~\ref{app:ab}.

We test 3 images from Set14 for the inpainting task with a missing pixel of $50\%$. 
Table~\ref{tab:ablation} reports the quantitative results.
Removing temporal coupling degrades reconstruction quality, reducing PSNR from
32.77 dB to 32.31 dB. This confirms that explicitly coupling adjacent states is
beneficial for reconstruction. %, even when the average state variation remains comparable. 
Removing stochastic perturbation leads to %the smallest $\overline{\Delta}$, but 
significantly worsens PSNR, SSIM, and LPIPS. This
suggests that overly restricting the trajectory can reduce the exploration
ability of the solver and limit reconstruction refinement. Removing weight
inheritance causes the largest performance drop% and substantially increases $\overline{\Delta}$, 
indicating that optimization continuity across states is
crucial for stable trajectory construction. Overall, the full TRACE model
achieves the best reconstruction quality, showing that temporal coupling,
stochastic perturbation, and weight inheritance play complementary roles.

\begin{table}[t]
\centering
\caption{
Ablation study on trajectory structure.
%We report reconstruction quality and the average state variation $\overline{\Delta}$. 
The state variation is used as a diagnostic of trajectory
behavior rather than a standalone objective, since overly small variation may
limit reconstruction refinement.
}
\label{tab:ablation}
\begin{tabular}{lccc}
\toprule
Variant 
& PSNR $\uparrow$ 
& SSIM $\uparrow$
& LPIPS $\downarrow$
%& $\overline{\Delta}$ 
\\
\midrule
TRACE w/o temporal coupling 
& \underline{32.31} 
& \underline{0.950} 
& \underline{0.0279} 
%& 6.73 
\\
TRACE w/o stochastic perturbation 
& 30.59 
& 0.939 
& 0.0395 
%& 2.61 
\\
TRACE w/o weight inheritance 
& 26.01 
& 0.784 
& 0.2166 
%& 12.65 
\\
\textbf{TRACE} 
& \textbf{32.77} 
& \textbf{0.953} 
& \textbf{0.0243} 
%& 7.14 
\\
\bottomrule
\end{tabular}
\end{table}

\section{Effect of Inner and Outer Iterations}
\label{app:TK}

We study the effect of the number of outer trajectory states and inner
optimization steps in TRACE. Specifically, we fix the total number of network
updates as $N=TK$ and vary the number of outer states $T$ and the number of inner
optimization steps $K$ per state. This allows us to examine the trade-off between
frequent trajectory updates and sufficient optimization at each trajectory state.

We test 3 images from Set14 for the inpainting task with a missing pixel of $50\%$.
Table~\ref{tab:iteration_ablation} reports the results. Increasing $T$ from
10 to 40 generally improves reconstruction quality, suggesting that more
frequent trajectory updates help guide the reconstruction process and prevent
over-optimizing a single state. The best performance is achieved at
$T=40$ and $K=150$, which provides a good balance between trajectory evolution
and inner-state refinement. However, further increasing $T$ to 50 or 60 reduces
the number of inner optimization steps per state and leads to slightly degraded
performance. This indicates that excessively frequent trajectory updates may
limit the optimization accuracy of each state. Overall, these results support
using a moderate number of trajectory states with sufficient inner optimization.

\begin{table}[t]
\centering
\small
\setlength{\tabcolsep}{7pt}
\renewcommand{\arraystretch}{1.08}
\caption{
Ablation on the number of outer trajectory states $T$ and inner optimization
steps $K$. The total number of network updates is fixed as $N=TK=6000$.
Moderate trajectory updates improve reconstruction quality, while too many
outer states may leave insufficient optimization at each state.
}
\label{tab:iteration_ablation}
\begin{tabular}{ccccc}
\toprule
$T$ & $K$ 
& PSNR $\uparrow$ 
& SSIM $\uparrow$
& LPIPS $\downarrow$ \\
\midrule
10  & 600 & 31.65 & 0.946 & 0.0286 \\
20  & 300 & 32.26 & \underline{0.951} & \textbf{0.0252} \\
30  & 200 & 31.92 & 0.946 & 0.0315 \\
40  & 150 & \textbf{32.65} & \textbf{0.952} & \underline{0.0259} \\
50  & 120 & \underline{32.43} & 0.948 & 0.0273 \\
60  & 100 & 32.21 & 0.948 & 0.0323 \\
\bottomrule
\end{tabular}
\end{table}

% \subsection{Perturb Inner Iteration}
% Inspired by the DIP-based methods \cite{ulyanov2018dip}, we add the additive noise $\boldsymbol{\epsilon}_{in}\sim\mathcal{N}(0,I)$ with noise level $\sigma_{in}$ into the network input during optimization. 

% $x\_t\_preturbed = x_t + torch.zeors\_like(x\_t).normal(std=\sigma_{in})$\\ $x\_star = self.net(x\_t\_preturbed)$

% \section{Robustness under Increasing Degradation}\label{app:degradation}

% We evaluate whether the benefit of trajectory regularization becomes more pronounced as the inverse problem becomes more challenging. 
% Specifically, we vary the level of degradation for different tasks: Gaussian noise level for denoising, number of projection views for sparse-view CT.

% Figure~\ref{fig:degradation_robustness} shows reconstruction quality and average trajectory variation under increasing degradation. 
% As the degradation becomes more severe, baseline methods exhibit noticeable performance degradation and increased trajectory variation, indicating unstable reconstruction framework. 
% In contrast, the proposed method maintains more stable trajectories and degrades more gracefully.

% \begin{figure}[t]
% \centering
% % \includegraphics[width=0.9\linewidth]{figs/degradation_robustness.pdf}
% \caption{
% Robustness under increasing degradation. 
% Left: reconstruction quality. 
% Right: average trajectory variation. 
% The proposed method maintains more stable trajectories and degrades more gracefully under challenging conditions.
% }
% \label{fig:degradation_robustness}
% \end{figure}

\section{Effect of Pretrained Initialization}
\label{app:pretrain}

We further investigate whether the proposed trajectory formulation can benefit from pretrained initialization. Specifically, instead of training the neural mapping $D_{\theta_t}$ from a random initialization, we initialize the network parameters using the publicly released unconditional 256$\times$256 Guided Diffusion checkpoint (\texttt{256x256\_diffusion\_uncond.pt}) from OpenAI~\cite{openai_guided_diffusion}. We then optimize the model within the same TRACE trajectory formulation. The temporal coupling, stochastic perturbation, and optimization procedure are kept unchanged. Thus, the pretrained model is used only as an initialization, rather than as a fixed external prior or an additional regularization module.

Table~\ref{tab:pretrained_prior} reports the average results of 3 images from Set14 for inpainting task. Pretrained initialization
slightly improves PSNR and SSIM, increasing PSNR from 32.59 dB to 32.75 dB and
SSIM from 0.949 to 0.952. %The average trajectory variation is also slightly reduced, suggesting that a better initialization can lead to a smoother trajectory under the same temporal coupling mechanism. 
Although LPIPS is slightly
worse in this setting, the results indicate that TRACE can benefit from
pretrained initialization while maintaining stable trajectory behavior.

\begin{table}[h]
\centering
\small
\setlength{\tabcolsep}{6pt}
\renewcommand{\arraystretch}{1.08}
\caption{
Effect of pretrained initialization.
The pretrained model is used only to initialize the network parameters before
TRACE optimization; it is not used as a fixed external prior. %We report reconstruction quality and average trajectory variation $\overline{\Delta}$.
Pretrained initialization improves PSNR and SSIM. % while maintaining comparable trajectory behavior.
}
\label{tab:pretrained_prior}
\begin{tabular}{lccc}
\toprule
Method 
& PSNR $\uparrow$ 
& SSIM $\uparrow$
& LPIPS $\downarrow$
%& $\overline{\Delta}$ 
\\
\midrule
TRACE, random initialization
& 32.59 
& 0.949 
& \textbf{0.0279} 
%& 6.74 
\\
TRACE, pretrained initialization
& \textbf{32.75} 
& \textbf{0.952} 
& 0.0301 
%& \textbf{6.34} 
\\
\bottomrule
\end{tabular}
\end{table}

\iffalse
\section{Failure Cases}\label{app:future}

We observe degraded performance when:
\begin{itemize}
    \item $\beta_t$ is too small (unstable trajectories)
    \item $\sigma_t$ is too large (excessive perturbation)
    \item extremely high noise levels
\end{itemize}
\fi

\end{document}